\definecolor{darkblue}{rgb}{0, 0, 0.85}
\definecolor{lightgreen}{rgb}{.9,1,.9}
\definecolor{trolleygrey}{rgb}{0.5, 0.5, 0.5}
\definecolor{BrickRed}{rgb}{0.6,0,0}
\definecolor{RoyalBlue}{rgb}{0,0,0.8}
\definecolor{Tdgreen}{rgb}{0,0.4,0.7}
\definecolor{pinegreen}{rgb}{0.0, 0.47, 0.44}
\definecolor{cornellred}{rgb}{0.7, 0.11, 0.11}
\definecolor{cadmiumgreen}{rgb}{0.0, 0.42, 0.24}
\definecolor{spirodiscoball}{rgb}{0.06, 0.75, 0.99}
\definecolor{mylightblue}{rgb}{0.85, 0.90, 0.94}
\definecolor{maroon}{cmyk}{0,0.87,0.68,0.32}
\DeclarePairedDelimiterX{\infdivx}[2]{(}{)}{%
  #1\;\delimsize\|\;#2%
}
\newcommand{\infdiv}{D_{\mathrm{KL}}\infdivx}
\def\eqref#1{equation~\ref{#1}}
\def\1{\bm{1}}
\def\px{{p(\xbm)}}
\def\qx{{q(\xbm)}}
\def\py{{p(\ybm)}}
\def\qy{{q(\ybm)}}
\def\pxsig{{p_\sigma(\xbm_\sigma)}}
\def\qxsig{{q_\sigma(\xbm_\sigma)}}
\def\xsig{{\xbm_\sigma}}
\def \dx{{\dd \xbm}}
\def \dxsig{{\dd \xsig}}
\DeclareMathAlphabet{\mathsfit}{\encodingdefault}{\sfdefault}{m}{sl}
\SetMathAlphabet{\mathsfit}{bold}{\encodingdefault}{\sfdefault}{bx}{n}
\def\R{\mathbb{R}}
\def\E{\mathbb{E}}
\def\mbm{{\bm{m}}}
\def\xbm{{\bm{x}}}
\def\zbm{{\bm{z}}}
\def\ybm{{\bm{y}}}
\def\zbm{{\bm{z}}}
\def\abm{{\bm{a}}}
\def\bbm{{\bm{b}}}
\def\nbm{{\bm{n}}}
\def\bbm{{\bm{b}}}
\def\sigmabm{{\bm{\sigma}}}
\def\Xbm{{\bm{X}}}
\def\Ybm{{\bm{Y}}}
\def\Zbm{{\bm{Z}}}
\def\Hbm{{\mathbf{H}}}
\def\Ubm{{\bm{U}}}
\def\Vbm{{\bm{V}}}
\def\Sigmabm{{\bm{\Sigma}}}
\def\Pbm{{\bm{P}}}
\def\Fbm{{\bm{F}}}
\def\Ibm{{\bm{I}}}
\def\Wbm{{\bm{W}}}
\def\Ncal{{\mathcal{N}}}
\def\Dsf{{\mathsf{D}}}
\def\Dsfhat{{\mathsf{\widehat{D}}}}
\def\Tsf{{\mathsf{T}}}
\def\Tsf{{\mathsf{T}}}
\def\Dsf{{\mathsf{D}}}
\def\xbmbar{{\overline{\bm{x}}}}
\def\zbmbar{{\overline{\bm{z}}}}
\def\ybmbar{{\overline{\bm{y}}}}
\def\nbmbar{{\overline{\bm{n}}}}
\newcommand{\KL}{D_{\mathrm{KL}}}
\newcommand{\hlgreen}[1]{{\sethlcolor{lightgreen}\hl{#1}}}
\newcommand{\hlblue}[1]{{\sethlcolor{mylightblue}\hl{#1}}}
\def\defn{\,\coloneqq\,}
\theoremstyle{plain}
\newtheorem{theorem}{Theorem}
\newtheorem{theoremsup}{Theorem}
\newtheorem{corollarysup}{Corollary}
\theoremstyle{definition}
\newtheorem{assumption}{Assumption}
\theoremstyle{remark}
\newtheorem{remarksup}{Remark}
\crefname{equation}{Eq.}{Eqs.}
\title{Unsupervised Detection of Distribution Shift in\\Inverse Problems using Diffusion Models}
\author{%
\normalsize Shirin Shoushtari \quad Edward P.~Chandler \quad Yuanhao Wang \\  
\textbf{M.~Salman Asif} \quad Ulugbek S.~Kamilov\\[0.7em]
\small \textnormal{Washington University in St.\ Louis}\\
\footnotesize \texttt{\{s.shirin, e.p.chandler, yuanhao, kamilov\}@wustl.edu}\\[0.5em]
\small \textnormal{University of California, Riverside}\\
\footnotesize \texttt{sasif@ucr.edu}
}
\begin{document}

\maketitle

\begin{abstract}
Diffusion models are widely used as priors in imaging inverse problems. However, their performance often degrades under distribution shifts between the training and test-time images. Existing methods for identifying and quantifying distribution shifts typically require access to clean test images, which are almost never available while solving inverse problems (at test time). We propose a fully \emph{unsupervised} metric for estimating distribution shifts using \emph{only} indirect (corrupted) measurements and score functions from diffusion models trained on different datasets. We theoretically show that this metric estimates the KL divergence between the training and test image distributions. Empirically, we show that our score-based metric, using only corrupted measurements, closely approximates the  KL divergence computed from clean images. Motivated by this result, we show that aligning the out-of-distribution score with the in-distribution score---using only corrupted measurements---reduces the KL divergence and leads to improved reconstruction quality across multiple inverse problems.
\end{abstract}

\section{Introduction}
\label{intro}

Standard \emph{deep learning} models typically assume that training and test data are drawn from the same distribution. However, this assumption often fails~\cite{zhang2022whyfail}, with out-of-distribution (OOD) test inputs causing significant performance degradation---specially in domains like healthcare and robotics~\cite{yang2024generalized}. Detecting and quantifying distribution shifts is thus essential for building robust models. Recent works have focused on characterizing distribution shifts~\cite{wiles2022a, peng2021wild, chen2021mandoline} and detecting OOD samples~\cite{yang2022openood} (see also reviews in~\cite{salehi2022unified, yang2024generalized}). A widely used strategy for OOD detection is based on model confidence, where softmax-based indicators---such as low maximum probability or high entropy---serve as simple yet effective proxies for detecting distribution shifts, especially in classification tasks~\cite{hendrycks2017a, liang2018enhancing}.

Diffusion  models (DMs)~\cite{ho2020denoising, song2020score} have been shown to achieve state-of-the-art performance across a wide range of tasks, including high-quality image generation~\cite{vahdat2021score, dhariwal2021diffusion, latent_diffusion, Karras2022edm, kim2022refining}, imaging inverse problems~\cite{chung2023diffusion, chung2024decomposed}, and medical imaging~\cite{chung2023solving, chung2022score, xie2022measurement, li2023descod, adib2023synthetic} (see also recent reviews~\cite{daras2024survey, kazerouni2023diffusion, croitoru2023diffusion, li2023diffusion}). These models approximate the score function of the data distribution and enable principled sampling via stochastic differential equations~\cite{song2020score}, allowing data generation from pure noise. Since diffusion models approximate the full data distribution through learned score functions, they are inherently sensitive to distribution shifts and require efficient methods for OOD detection and shift quantification. Recent work has explored this by analyzing various diffusion model-based approaches, including score consistency, sample likelihood, reconstruction error, and properties of the diffusion trajectory~\cite{heng2024out, graham2023denoising, liu2023unsupervised, livernochediffusion}.

A key limitation of existing OOD detection methods is their reliance on clean test-time images.  Moreover, recent methods that leverage diffusion models primarily focus on binary detection of OOD samples, rather than quantifying the degree of distribution shift~\cite{heng2024out, graham2023denoising, liu2023unsupervised, livernochediffusion}. In many applications, such as inverse problems, only indirect (corrupted) measurements are available, and access to clean ground-truth images is unrealistic. To address this gap, we propose the first \emph{unsupervised} metric for estimating the distribution shift between diffusion models trained on \emph{in-distribution (InD)} and OOD data, using \textit{only} corrupted measurements. Under clearly stated assumptions on the measurement operator, we prove that the proposed metric estimates the KL divergence between the underlying image distributions. Empirical results on inpainting and MRI reconstruction demonstrate that our metric, while operating solely on corrupted measurements, closely approximates the  image-domain KL divergence computed from clean images, across a variety of datasets and corruption levels. Figure~\ref{fig:metric} illustrates this behavior, comparing our proposed measurement-domain KL metric with the  KL divergence computed using clean images, under inpainting corruption with masking probabilities $p \in \{0.2, 0.5, 0.8\}$, using an InD model trained on FFHQ and OOD models trained on MetFaces, AFHQ, and Microscopy.

Domain adaptation methods aim to mitigate distribution shifts between training and test data~\cite{farahani2021brief} and are well-studied in the broader machine learning literature~\cite{zhang2022transfer, Csurka2017}. In inverse problems, however, adaptation is particularly challenging due to the unavailability of clean test-time data. Recent \emph{self-supervised} approaches have explored adapting deep learning models using only measurement-domain signals~\cite{chung2024deep, barbano2025steerable, darestani22a}, but these methods are largely heuristic and lack a theoretical justification for their effectiveness. Our metric provides a principled framework that formally connects distribution shifts to the discrepancy between score functions, evaluated directly on corrupted measurements. This connection not only quantifies the shift but also explains why adapting score functions on partial measurements can improve generalization. Empirically, we show that such adaptation reduces the estimated KL divergence and improves reconstruction quality across multiple inverse problems.

\textbf{Our contributions are:} \textbf{(1)} The first closed‑form, measurement‑domain estimate of distribution shifts that relies only on corrupted measurements. We prove that the proposed metric can equal the image‑domain KL divergence—without requiring ground‑truth images. \textbf{(2)} Empirical validation that the proposed metric closely approximates the  KL divergence across two inverse problems, including inpainting and MRI, using only corrupted measurements. \textbf{(3)} Motivated by the proposed metric, a simple adaptation approach that aligns the OOD score function with the InD data using only corrupted measurements. Our results confirm that this alignment reduces the estimated distribution shift and improves reconstruction performance in inverse problems.

\section{Background}
\label{s:bckgrnd}
\subsection{Denoising Diffusion Probabilistic Models}
Diffusion models~\cite{ho2020denoising, song2020score, Karras2022edm} are  trained to estimate the \emph{score function} of the data distribution---that is the gradient of the log-density. During training, a forward process progressively adds Gaussian noise to clean data samples $ \xbm \sim \px $ over multiple steps, while the model learns to reverse this process by denoising the corrupted samples at each step. This forward process is typically modeled as a Markov chain, $ \xbm_{\sigma_0} \rightarrow \xbm_{\sigma_1} \rightarrow \cdots \rightarrow \xbm_{\sigma_\infty} $, where $ \xbm_{\sigma_0} = \xbm $ is the clean image and noise levels $\sigma_0 < \sigma_1 < \cdots < \sigma_\infty$ increase at each step. We denote the full set of noise levels by $ \sigmabm \defn [\sigma_0, \cdots, \sigma_\infty] $, which corresponds to a time-dependent diffusion process.

%%%%%%%%%%%%%%%%%%%%%%%%%%%%%%%%%%%%%%%%%%%%%%%%%%%%%%%%%%%%
\begin{figure}[t]
    \centering
    \includegraphics[width=1\textwidth]{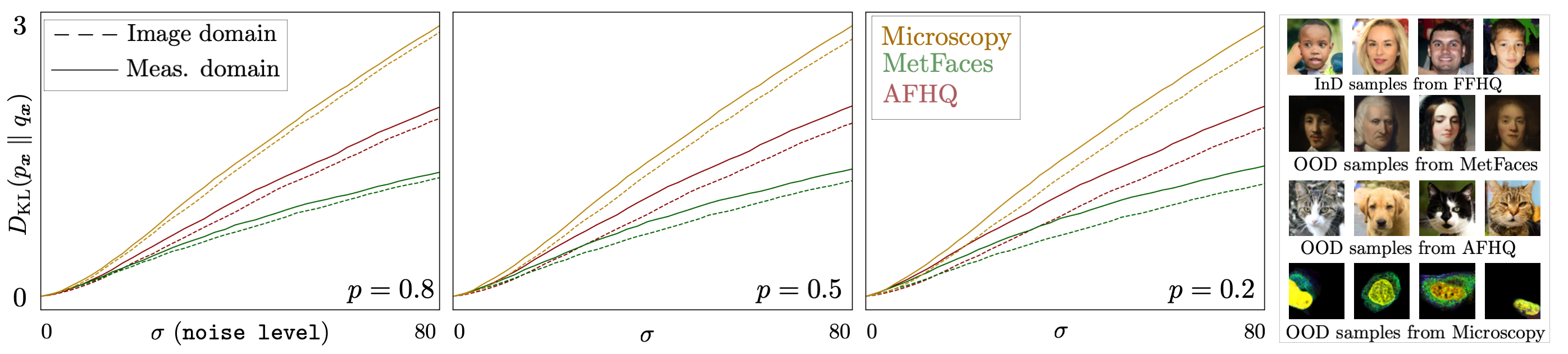}
    \caption{\small\textit{Comparison of the  distribution shift (dashed lines), computed using clean images, and our proposed measurement-domain KL metric (solid lines) between an InD model trained on FFHQ and OOD models trained on MetFaces, AFHQ, and Microscopy. Results are shown under inpainting masks with $p \in \{0.2, 0.5, 0.8\}$. The vertical axis shows $\KL$, evaluated as the integrand in~\cref{eq:mainmetric} and~\cref{eq:KLforimage} up to diffusion noise level $\sigma$. Right: Samples from InD and OOD datasets. Note how the proposed metric accurately tracks the  KL divergence, even under high-levels of corruption (smaller values of $p$).}}
    \label{fig:metric}
\end{figure}
%%%%%%%%%%%%%%%%%%%%%%%%%%%%%%%%%%%%%%%%%%%%%%%%%%%%%%%%%%%%
The intermediate noisy variable $ \xbm_\sigma $ is defined using a Gaussian kernel:
\begin{equation*}
p(\xbm_\sigma|\xbm) = \Ncal(\xbm, \sigma^2 \Ibm ),
\end{equation*}
which enables direct sampling via  $ \xbm_\sigma = \xbm + \nbm $, where $ \nbm \sim \Ncal(0, \sigma^2 \Ibm) $. The marginal distribution of the noisy images, denoted $ \pxsig $, is given by:
\begin{equation}
\label{eq:ProbNoisy}
\pxsig  = \int p(\xbm_\sigma | \xbm)\px\dx = \int G_\sigma(\xbm_\sigma - \xbm)p(\xbm)\dx ,
\end{equation}
where $ G_\sigma $ denotes the Gaussian density function with standard deviation $ \sigma \geq 0 $.

Tweedie’s formula establishes a link between Gaussian denoising and score estimation~\cite{Robbins1956Empirical, Miyasawa61} by expressing the posterior mean in terms of the score of the noise-corrupted density:
\begin{equation}
\label{eq:Tweedie}
\Dsf_\sigma (\xbm_\sigma) = \E [\xbm|\xbm_\sigma]   = \xbm_\sigma + \sigma^2 \nabla \log \pxsig.
\end{equation}
This result implies that learning the Gaussian denoiser $ \Dsf_\sigma $ is equivalent to learning the score $ \nabla \log \pxsig $ of the noisy distribution, for all noise levels $ \sigma \geq 0 $. In practice, the denoiser $ \Dsf_\sigma $ is trained to minimize the mean squared error (MSE) between the clean and denoised signals:
\begin{equation}
\label{eq:MMSEdenoiser}
\text{MSE}(\Dsf_\sigma) = \E_{\xbm, \xbm_\sigma} \left [ \|\xbm - \Dsf_\sigma(\xbm_\sigma)\|_2^2\right].
\end{equation}

A diffusion model consists of a collection of MMSE denoisers across all noise levels, $\{\Dsf_\sigma: \sigma \in \sigmabm\}$, which implicitly provide access to the score functions $ \nabla \log \pxsig $ of the noise-corrupted densities. These learned score functions enable sampling from the underlying clean image distribution $ \px $  via the reverse diffusion process~\cite{vincent2011connection, Raphan10}.

\subsection{Measuring Distribution Shifts with  Clean Images using Score Functions}
We extend the framework introduced in~\cite{song2021maximum, kadkhodaiegeneralization} to derive an expression for the KL divergence between the InD $\px$ and OOD $\qx$ densities. In particular, the KL divergence can be expressed in terms of the score functions of the corresponding noise-corrupted distributions as
\begin{equation}
\label{eq:KLforimage}
  \infdiv{p(\xbm)}{q(\xbm)} = \int_0^\infty \E_{\xbm\sim p(\xbm)} \left [\|\nabla_\xsig \log \pxsig- \nabla_\xsig \log \qxsig\|_2^2  \right]~\sigma~ \dd\sigma.
\end{equation}
Here, $\pxsig$ and $\qxsig$ denote the noise-corrupted distributions of $\px$ and $\qx$, respectively, at noise level $\sigma$. The score function $\nabla_\xsig \log \pxsig$ can be estimated using the Tweedie’s formula, which relates it to the posterior mean $\E[\xbm | \xsig]$ according to~\cref{eq:Tweedie}. This posterior mean, in turn, can be approximated by training MMSE denoisers via the loss in~\cref{eq:MMSEdenoiser}.

In practice, diffusion models are trained as denoisers across a range of noise levels to approximate the score functions of the corresponding noise-corrupted data distributions.
Thus, the KL divergence in~\cref{eq:KLforimage} can be estimated when two diffusion models are available: one trained on InD samples from $\px$, and another on ODD  samples from $\qx$. 

When using diffusion models to estimate KL divergence, it is assumed that both the InD and OOD models have accurately learned the score functions of their respective data distributions. The discrepancy between their learned Gaussian denoisers at each noise level reflects the extent of the distribution shift. Leveraging the connection between the conditional mean estimator provided by the deep MMSE denoiser and the score function from~\cref{eq:Tweedie}, we obtain a tractable metric for measuring distribution shift in image domain (see Appendix~\ref{app:proofKLDiv} for the proof, as well as~\cite{song2021maximum, kadkhodaiegeneralization} for additional discussion). Notably, the resulting metric corresponds to the integrated denoising gap between the InD and OOD diffusion models across all noise levels. 

The KL divergence formulation in~\cref{eq:KLforimage} quantifies the shift between the InD density $p(\xbm)$ and the OOD density $q(\xbm)$ only when clean InD images are available. To cover the more realistic setting in which we possess only corrupted measurements, we introduce an \emph{unsupervised} metric that estimates the same distribution shift directly from those measurements.

\section{Distribution Shift in Measurement Domain} \label{s:methods}

Clean images required for the KL divergence in~\cref{eq:KLforimage} are unavailable in many inverse problems. We therefore derive a measurement-domain KL estimator that quantifies distribution shift directly from the observed measurements and pretrained diffusion models.

\subsection{Problem Formulation}
We consider a set of measurement operators randomly drawn from the distribution $p(\Hbm)$. For a given $\Hbm \in \R^{m \times n}$, the measurement vector $\ybm \in \R^m$ is related to the underlying signal $\xbm \in \R^n$ via
\begin{equation}\label{eq:measurement_model}
\ybm = \Hbm \xbm + \zbm,
\end{equation}
where $\zbm \sim \Ncal(0, \sigma_{\zbm}^2 \Ibm)$ denotes the measurement noise. We assume that $\xbm$, $\zbm$, and $\Hbm$ are independently drawn from their respective distributions for each instance of the problem.

To simplify our analysis, we consider the singular value decomposition (SVD) to the measurement operator  $\Hbm$~\cite{kawar2022denoising, kawargsure}.  This decomposition facilitates a transformation that decouples the measurement process and allows the KL divergence---originally defined in the image domain---to be re-expressed in the measurement domain. 
We write the SVD of $\Hbm$ as  
\begin{equation}\label{eq:SVDofH}
    \Hbm = \Ubm \Sigmabm \Vbm^\Tsf, 
\end{equation}
where $\Ubm \in \R^{m\times m}$ and $\Vbm \in \R^{n\times n}$ are orthogonal matrices, and $\Sigmabm \in \R^{m \times n}$ is a  matrix of singular values. We define three transformed variables: $\xbmbar = \Vbm^\Tsf \xbm$, $\ybmbar = \Sigmabm^\dagger \Ubm^\Tsf \ybm$, and $\zbmbar = \Sigmabm^\dagger \Ubm^\Tsf \zbm$. Substitution of these variables into the original measurement model in~\cref{eq:measurement_model}, leads to relationship
\begin{equation}\label{eq:svdmeasurement}
    \ybmbar = \Pbm \xbmbar + \zbmbar, 
\end{equation}
where $\Pbm = \Sigmabm^\dagger \Sigmabm$ is a diagonal projection matrix with entries in $\{0, 1\}$, and  $\zbmbar \sim \Ncal(0, \sigma_\zbm^2 \Sigmabm^\dagger {\Sigmabm^\dagger}^\Tsf )$ represents anisotropic uncorrelated Gaussian noise.

In the noiseless setting, we can rewrite~\cref{eq:svdmeasurement} as $\ybmbar = \Pbm \xbmbar$. For every noise level $\sigma$ in the noise schedule vector $\sigmabm$, we consider a noisy version of the SVD observations
\begin{equation}\label{eq:ybarsigma}
    \ybmbar_\sigma = \Pbm \xbmbar_\sigma = \Pbm \xbmbar + \nbmbar = \ybmbar +\nbmbar, \qquad \text{where} \qquad \nbmbar = \Pbm \nbm \sim \Ncal(0, \sigma^2 \Pbm),
\end{equation}
where $\nbm \sim \Ncal(0, \sigma^2 \Ibm)$ and $\Pbm$ is an orthogonal projection. Note that $\zbm$ refers to measurement noise in inverse problems, while $\nbm$ denotes noise added in the diffusion process. 

\subsection{Theoretical Results}\label{ss:theory}

We now present our main theoretical result for measuring the distribution shift between the InD prior $\px$ and OOD prior $\qx$. Theorem~\ref{thm:thm1} below presents results for noiseless measurements for imaging systems modeled as $\ybm = \Hbm \xbm$. We extend this result to the noisy case in Theorem~\ref{thm:thm2} of Appendix~\ref{app:proofthm2}.

We require the following assumptions to establish our theoretical results.
\begin{assumption}\label{as:one}
The range of the measurement operators $\Hbm \sim p(\Hbm)$, used across experiments collectively spans the signal space $\R^n$. 
\end{assumption}

The assumption enforces that, on average, the measurement operators collectively observe every signal direction—formally $\E[\Pbm]$ is full‑rank on the relevant subspace. This assumption is commonly adopted in self-supervised inverse problems~\cite{kawargsure, aggarwal2022ensure}.
\begin{assumption}\label{as:two}
    The set of measurement operators $\Hbm$ share a common right-singular matrix. Each $\Hbm$ has the form
    $\Hbm = \Ubm \Sigmabm \Vbm^\Tsf$ with a fixed matrix $\Vbm \in \mathbb{R}^{n \times n} $, for all $\Hbm \sim p(\Hbm)$.
\end{assumption}
The shared‑subspace assumption guarantees a common latent basis for all $\Hbm$, permitting direct comparison of score functions and denoising outputs across operators. This condition naturally holds in many inverse problems, such as subsampled Fourier imaging and inpainting~\cite{kawargsure, tachella2023sensing, candes2006stable}. We can now present our main result.

%%%%%%%%%%%%%%%%%%%%%  Theorem %%%%%%%%%%%%%%%%%%%%%
\begin{theorem}\label{thm:thm1}
Let $\ybmbar_\sigma = \Pbm \xbmbar + \nbmbar$ denote the noisy projected measurements at noise level $\sigma$ according to~\cref{eq:ybarsigma}. Then, the KL divergence between the InD density $\px$ and the OOD density $\qx$ can be expressed as
\begin{align}\label{eq:mainmetric}
    \infdiv{\px}{\qx} =  \int_0^\infty \E \big[\|\Wbm(\nabla \log p_\sigma(\Vbm\ybmbar_\sigma) - \nabla \log q_\sigma(\Vbm\ybmbar_\sigma))\|_2^2 \big]\sigma~ \dd\sigma, 
\end{align}
where $\Wbm = \E[\Pbm]^{-3/2}$ is a diagonal scaling matrix, $\Vbm$ is the right singular vector from SVD of $\Hbm$, and expectation is taken over $\Pbm$ and $\ybmbar \sim p(\ybmbar| \Pbm)$.
\end{theorem}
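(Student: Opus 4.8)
The plan is to start from the image-domain identity in \cref{eq:KLforimage} and transport it into the measurement domain in three conceptual moves: an exact orthogonal change of variables into the right-singular basis of $\Hbm$, a coordinate-wise reading of the resulting Fisher divergence, and a probabilistic averaging over the random projection $\Pbm$ that replaces the full signal with its partial observations. \cref{as:two} is what makes the first move clean, since every operator shares the same $\Vbm$, so $\xbmbar = \Vbm^\Tsf \xbm$ is a single fixed rotation; \cref{as:one} is what makes the last move well posed, since $\E[\Pbm]$ is then invertible and $\Wbm = \E[\Pbm]^{-3/2}$ exists.

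First I would exploit the invariance of the KL divergence, and of the Fisher-divergence integrand in \cref{eq:KLforimage}, under the orthogonal map $\xbm \mapsto \Vbm^\Tsf \xbm$. Writing $\xbmbar = \Vbm^\Tsf\xbm$ and $\xbmbar_\sigma = \xbmbar + \Vbm^\Tsf \nbm$, the rotated diffusion noise $\Vbm^\Tsf\nbm$ is again $\Ncal(0,\sigma^2\Ibm)$, so $p_\sigma,q_\sigma$ pull back to densities $\bar p_\sigma,\bar q_\sigma$ on the $\xbmbar$-coordinates with $\bar p_\sigma(\xbmbar_\sigma) = p_\sigma(\Vbm\xbmbar_\sigma)$. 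The chain rule gives $\nabla_{\xbmbar_\sigma}\log\bar p_\sigma = \Vbm^\Tsf\nabla\log p_\sigma(\Vbm\xbmbar_\sigma)$, and since $\Vbm$ preserves the Euclidean norm, \cref{eq:KLforimage} becomes $\infdiv{\px}{\qx} = \int_0^\infty \E[\,\|\nabla\log p_\sigma(\Vbm\xbmbar_\sigma) - \nabla\log q_\sigma(\Vbm\xbmbar_\sigma)\|_2^2\,]\,\sigma\,\dd\sigma$, now over the full noisy SVD-coordinates $\xbmbar_\sigma$.

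Next I would decompose this squared norm coordinate by coordinate and inject the measurement model $\ybmbar_\sigma = \Pbm\xbmbar_\sigma$ from \cref{eq:ybarsigma}. The idea is that for a fixed realization of $\Pbm$, the observed entries of $\ybmbar_\sigma$ coincide with the corresponding entries of $\xbmbar_\sigma$ and carry exactly the image-domain noise level $\sigma^2$, so evaluating the score at the masked point $\Vbm\ybmbar_\sigma$ recovers the score-difference energy on the observed directions. Averaging over $\Pbm\sim p(\Hbm)$, the $i$-th coordinate is retained with frequency $\E[\Pbm]_{ii}$, and the role of $\Wbm$ is to reweight each direction so that this operator-averaged, measurement-domain expectation reproduces the full-domain per-coordinate energy $\E[(\nabla\log\bar p_\sigma - \nabla\log\bar q_\sigma)_i^2]$. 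Collecting the coordinates and restoring the noise-level integral then yields exactly \cref{eq:mainmetric}, with expectation over $\Pbm$ and $\ybmbar\sim p(\ybmbar\mid\Pbm)$.

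I expect the crux to be pinning down the exact scaling $\Wbm = \E[\Pbm]^{-3/2}$ rather than the naive inverse-frequency weight $\E[\Pbm]^{-1/2}$. The extra power must come from combining the observation frequency of each direction with the rescaling that relates the projected, anisotropic-noise score to the image-domain score, and making this precise requires carefully interchanging the expectation over the random operator $\Pbm$, the conditional expectation that defines the measurement-domain score through Tweedie's formula \cref{eq:Tweedie}, and the $\sigma$-integral. A secondary technical point is justifying that the full image-domain score evaluated at the masked point $\Vbm\ybmbar_\sigma$ contributes only through the observed directions after averaging, so that the unobserved coordinates, where $\ybmbar_\sigma$ is zeroed, do not corrupt the identity, together with the integrability needed to exchange $\int_0^\infty(\cdot)\,\sigma\,\dd\sigma$ with the expectations.
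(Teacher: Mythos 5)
Your first move---the orthogonal change of variables through the shared right-singular basis, with $\Vbm^\Tsf\nbm\sim\Ncal(0,\sigma^2\Ibm)$ and norm preservation of the score difference---matches the paper's opening step exactly. The rest of the proposal, however, has a genuine gap: the central mechanism of the proof is absent, and the heuristic you substitute for it is wrong. You propose to read the Fisher integrand coordinate-by-coordinate, arguing that for a fixed realization of $\Pbm$ the observed entries of $\ybmbar_\sigma$ coincide with those of $\xbmbar_\sigma$, so that evaluating the score at the masked point $\Vbm\ybmbar_\sigma$ ``recovers the score-difference energy on the observed directions,'' after which inverse-frequency reweighting restores the full energy. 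This does not work: $\nabla\log p_\sigma$ does not decompose across coordinates unless the density factorizes, and $\Vbm\ybmbar_\sigma$, with its unobserved coordinates zeroed, is an atypical point of $p_\sigma$, so the score evaluated there bears no clean per-coordinate relationship to the full score difference. You concede as much when you note that this picture predicts the naive weight $\E[\Pbm]^{-1/2}$ and that the true exponent ``must come from'' a rescaling you cannot pin down---but locating that exponent \emph{is} the content of the theorem, not a deferrable technicality.

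The paper's proof never fixes $\Pbm$ and reads off coordinates; it works with the marginal law of $\ybmbar_\sigma$ (a mixture over $\Pbm$) and two identities your plan never states. First, a Tweedie-type formula for the degenerate Gaussian kernel $\Ncal(0,\sigma^2\Pbm)$ gives $\nabla_{\ybmbar_\sigma}\log p(\ybmbar_\sigma)=\sigma^{-2}\,\Pbm\big(\E[\ybmbar\mid\ybmbar_\sigma]-\ybmbar_\sigma\big)$ (\cref{eq:scoreofmeasuremnet}); in the difference of the InD and OOD scores the $\ybmbar_\sigma$ terms cancel, leaving only posterior means. Second, the total-expectation contraction $\E[\ybmbar\mid\ybmbar_\sigma]=\E[\Pbm]\,\E[\xbmbar\mid\xbmbar_\sigma]$ (\cref{eq:expty_exptx}) is where the averaging over the random operator actually enters: it inserts one factor of $\E[\Pbm]$ into \emph{each} of the two factors of the outer product of score differences. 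The subsequent trace manipulation---using $\Pbm^2=\Pbm$, cyclicity, and the assumption that the score-function difference is independent of $\Pbm$---produces a third factor of $\E[\Pbm]$ when the residual $\Pbm$ is averaged out, and cancelling $\E[\Pbm]^3$ is exactly what forces $\Wbm^2=\E[\Pbm]^{-3}$, i.e.\ $\Wbm=\E[\Pbm]^{-3/2}$: two powers from the posterior-mean contraction, one from averaging the leftover projection. None of this machinery---the degenerate-kernel score, the contraction by $\E[\Pbm]$, the $\Pbm$-independence of the score gap---appears in your sketch, so the proposal cannot be completed along the lines you describe; the interchange-of-expectations care you flag at the end is real but secondary to these missing identities.
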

%%%%%%%%%%%%%%%%%%%%%  Theorem %%%%%%%%%%%%%%%%%%%%%
Theorem~\ref{thm:thm1} shows that, given noiseless measurements $\ybm = \Hbm \xbm$, the KL divergence between  $\px$ and $\qx$ can be computed entirely using the measurement domain data.  The KL divergence is then expressed as a functional of the difference between the InD and OOD score functions evaluated at $\Vbm \ybmbar_\sigma$, where  $\Vbm \ybmbar_\sigma$ is obtained by projecting $\ybm$ onto the row space of $\Hbm$ and perturbing it with diffusion noise $\sigma$. This result enables us to quantify the distribution shift using only the observed measurements, the known forward operator, and pre-trained score functions—without requiring access to the underlying clean images.

%  $V\bar y$ is equivalent to computing a pseudoinverse or min norm solution of x 
% H^T(HH^T)^-1y = VSU^T(US^-2U^T) y= VS^-1 U^Ty \equiv V(SS^-1)V^T x + noise... 

The weighting matrix $\Wbm$ is introduced to compensate for the effect of the projection matrix $\Pbm$, ensuring that all components contribute proportionally—particularly when the likelihood of different $\Pbm$ realizations is imbalanced. The score functions for the distributions $\px$ and $\qx$ are directly accessible from the pretrained diffusion models. The accuracy of the KL approximation is directly tied to the quality of the expectation estimate, which depends on the number of example measurements $N$ used in the computation. As $N$ increases, the empirical estimate of the expectation becomes more reliable, leading to a tighter approximation of the  KL divergence. Figure~\ref{fig:toy} illustrates this relationship using a toy example with Gaussian mixture models (GMMs), where the KL divergence between InD and OOD distributions is plotted as a function of the diffusion noise level $\sigma$. The blue curve represents the KL divergence computed in the image domain, while the red curve shows the corresponding approximation in the measurement domain under inpainting corruption with probability $p$. As shown, the measurement-domain KL closely tracks its image-domain counterpart, validating the effectiveness of our proposed metric under varying levels of measurement corruption.

The proof of Theorem~\ref{thm:thm1} is provided in Appendix~\ref{app:proofthm1}. An extension to the case of noisy measurements is presented in Theorem~\ref{thm:thm2}, with the corresponding proof in Appendix~\ref{app:proofthm2}. Notably, Theorem~\ref{thm:thm1} does not require the measurement operator $\Hbm$ to be invertible. However, when $\Hbm$ is invertible, the KL divergence expression simplifies significantly, as detailed in Corollary~\ref{crl:invertible_noiseless} in Appendix~\ref{app:proofthm1}. In this special case, there is no need to consider a set of measurement operators $\Hbm \sim p(\Hbm)$.

\begin{figure}[t]
    \centering
    \includegraphics[width=1.0\textwidth]{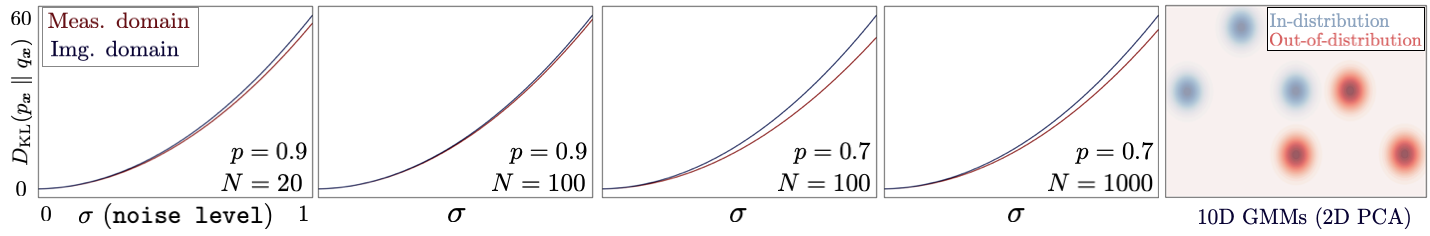}
    \caption{\small\textit{KL divergence plotted against the noise level $\sigma$ for InD and OOD Gaussian mixture models (GMMs). KL divergence computed in the image domain (blue) and measurement domain (red) under inpainting corruption with probability $p$, using $N$ InD data example. The measurement-domain KL divergence closely tracks its image-domain counterpart, and the approximation improves with increasing $N$ and $p$.}}
    \label{fig:toy}
\end{figure}

\section{Experiments}\label{s:exp}

In this section, we empirically validate the effectiveness of the proposed metric in quantifying distribution shifts. We simulate measurement operators through image inpainting and subsampled MRI, and compute the KL divergence as defined in Theorems~\ref{thm:thm1} and~\ref{thm:thm2}, under both noiseless and noisy measurement models.

Leveraging the theoretical connection between the KL divergence and the score functions, we apply an adaptation procedure to the OOD diffusion models by updating their score functions using only corrupted measurements to reduce the distribution shift. We then evaluate the impact of this adaptation on performance. Finally, we analyze how changes in the KL divergence correlate with reconstruction performance in inverse problems, showing that reducing the distribution shift leads to a better prior for diffusion model-based inference.

\subsection{Computing Distribution Shift}
\textbf{Inpainting.} To evaluate the proposed KL metric for quantifying distribution shift under inpainting corruption, we use the Flickr-Faces-HQ (FFHQ) dataset~\cite{karras2019style} as the InD data. The score function corresponding to the density $\px$ is obtained from the diffusion model trained on FFHQ. 

For the OOD densities $\qx$, we train separate diffusion models on the Animal Faces-HQ (AFHQ)\cite{choi2020starganv2}, MetFaces\cite{karras2020training}, and Microscopy (CHAMMI)~\cite{ChenCHAMMI2023} datasets. We follow the setup in~\cite{kawargsure} to simulate inpainting corruptions: images resized to $64 \times 64$ are divided into non-overlapping $4 \times 4$ patches, each of which is randomly erased with probability $p$. All diffusion models are trained using the framework in~\cite{Karras2022edm}.

We compute the KL divergence using the proposed metric in Theorem~\ref{thm:thm1}, which estimates the distribution shift directly from the corrupted measurements. This is compared to the KL divergence computed in the image domain using ~\cref{eq:KLforimage}. Assuming the diffusion models accurately capture the true score functions, the image-domain KL divergence serves as a ground truth for measuring the distribution shift.

\begin{wrapfigure}{r}{0.5\textwidth}
    \centering
    \includegraphics[width=0.48\textwidth]{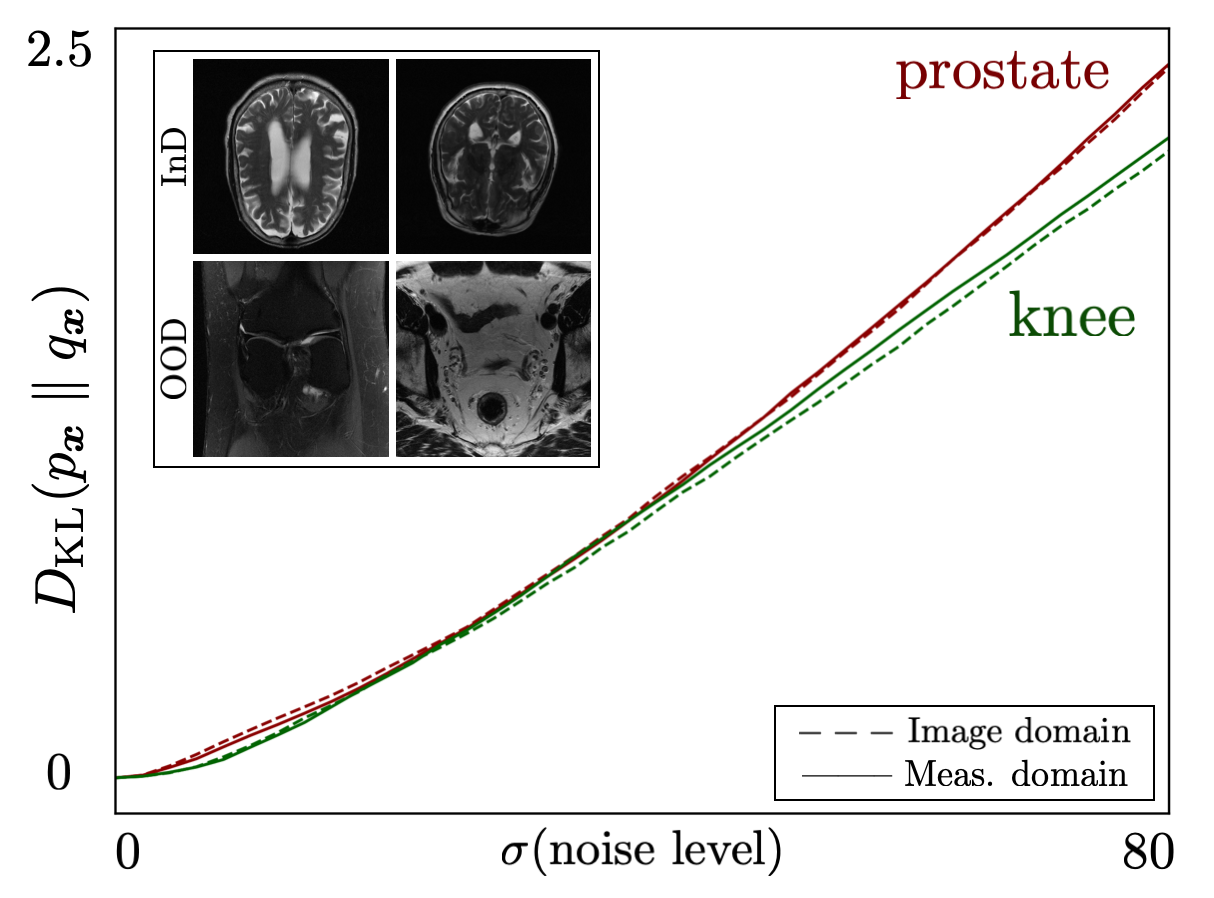}
\caption{\small\textit{Comparison of the  distribution shift (dashed lines), computed using clean images, and our proposed measurement-domain KL metric (solid lines) between an InD model trained on Brain slices and OOD models trained on Knee and Prostate slices from fastMRI dataset with acceleration rate $4$. The vertical axis shows $\KL$, evaluated as the integrand in~\cref{eq:mainmetric} and~\cref{eq:KLforimage} up to diffusion noise level $\sigma$. The proposed metric accurately tracks the  KL divergence. }}
    \label{fig:metricmri}
\end{wrapfigure}

Figure~\ref{fig:metric} presents the KL divergence between the InD and each OOD distribution, comparing the proposed measurement-domain metric with image-domain KL across inpainting rates $p \in \{0.2, 0.5, 0.8\}$. The vertical axis shows computed $\KL$ using the integrand in~\cref{eq:KLforimage} and~\cref{eq:mainmetric} up to noise level $\sigma$. Remarkably, the metric remains robust even under severe corruption ($p = 0.2$), effectively capturing the distribution gap. Although some accuracy is lost due to measurement corruption, the relative shift remains consistent—for instance, MetFaces, being visually closest to FFHQ, yields the lowest KL divergence. This demonstrates that the proposed metric reliably estimates distribution shift directly from corrupted measurements, without requiring clean images.

\textbf{Magnetic resonance imaging (MRI).}
MRI is a widely used medical imaging technique that acquires data in the frequency domain (k-space) using magnetic field gradients. To reduce scan time and patient discomfort, k-space is often under-sampled, resulting in accelerated acquisitions but yielding ill-posed inverse problems~\cite{hammernik2018learning, jalal2021robust}. 

We evaluate the proposed KL divergence metric under MRI subsampling. Brain MRI scans from the fastMRI dataset~\cite{knoll2020fastmri, zbontar2019fastmri} serve as the InD, with a diffusion model trained on center-cropped $320 \times 320$ slices to represent the score function of $\px$. Knee and prostate MRI slices from fastMRI are used for training the OOD diffusion models. To simulate accelerated MRI subsampling, we follow the protocol in~\cite{kawargsure, jalal2021robust}, applying Cartesian under-sampling masks with acceleration factors $R \in \{4, 6, 8\}$, where high-frequency components are sampled randomly.

Figure~\ref{fig:metricmri} compares KL divergence values computed in the measurement- and the image-domain. Red plots represent the distribution shift between brain (InD) and prostate (OOD) MRIs, while green plots indicate the shift between brain and knee MRIs. Dashed lines correspond to image-domain KL values, and solid lines to those computed directly from corrupted measurements. Notably, the distribution gap between brain MRI and each OOD dataset remains consistent across the KL divergence computed from clean images and MRI measurements, suggesting the reliability of the proposed KL metric for quantifying distribution shifts using only corrupted measurements. Additional experimental results are included in Appendix~\ref{ss:addexp}.

\subsection{Adaptation Effect on Distribution Shift}

Theorem~\ref{thm:thm1} introduces a new characterization of the KL divergence between InD and OOD priors by expressing it as the expected squared difference between their score functions evaluated on noisy measurements. This formulation naturally motivates adapting the denoising network using projected measurements, as reducing the score function discrepancy at those locations directly decreases the KL divergence and, consequently, the distribution shift. 

To empirically validate this, we begin with a diffusion model trained on the OOD dataset (AFHQ) and apply lightweight adaptation using projected measurements from the InD dataset (FFHQ). Specifically, we select $64$ (or $128$) clean training images from the InD distribution, obtain their corrupted measurements, and project them onto the latent basis defined by the measurement operator (i.e., compute $\Vbm \bar{\ybm}$). Figure~\ref{fig:adapted} plots the resulting KL divergence as a function of noise level $\sigma$, comparing the original AFHQ model with two adapted variants: Adapted64 (adapted using $64$ projected measurements) and Adapted128 (using $128$). Further details on the adaptation procedure are provided in Appendix~\ref{ss:adaptation}. The results show that even modest adaptation based solely on corrupted measurements significantly reduces the KL divergence.

%%%%%%%%%%%%%%%%%%%%%  figure %%%%%%%%%%%%%%%%%%%%%
\begin{wrapfigure}{r}{0.45\textwidth}
    \centering
    \includegraphics[width=0.45\textwidth]{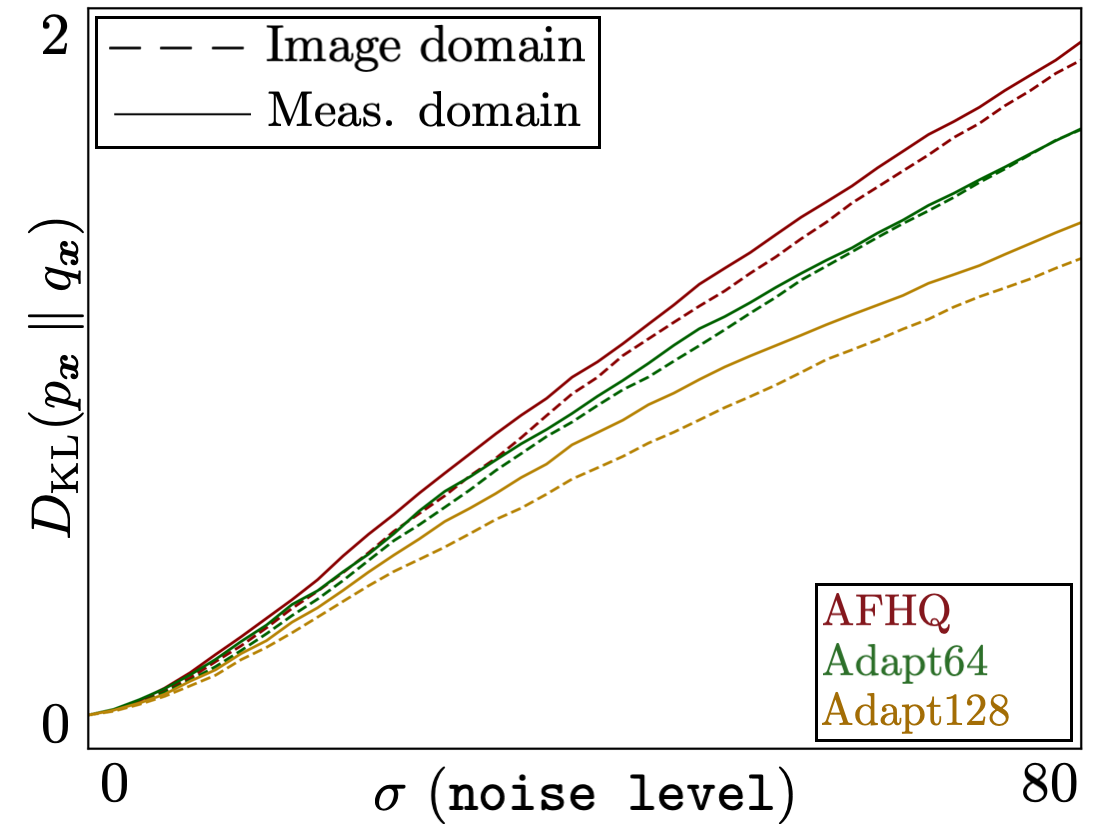}
    \caption{\small\textit{$\KL$ between FFHQ and AFHQ, as well as adapted models using 64 and 128 projected measurements, measured in the image domain (dashed) and the measurement domain (solid) for inpainting with p=0.8. Notably, adapting the  network using only projected measurements significantly reduces the distributional gap.  }}
    \label{fig:adapted}
\end{wrapfigure}
%%%%%%%%%%%%%%%%%%%%%  figure %%%%%%%%%%%%%%%%%%%%%

% \subsection{Distribution Shift in Inverse Problems}
We  evaluate the impact of distribution shift—and adaptation on addressing it—on imaging inverse problems. 
We focus on solving image inpainting using DPS~\cite{chung2023diffusion} with  four models: an InD model (FFHQ), an OOD model(AFHQ), and the AFHQ model adapted using $64$ and $128$ projected measurements from FFHQ.  The adaptation procedure fine-tunes the OOD model to better approximate the score function on corrupted projected measurements from the InD, without using clean images.

Figure~\ref{fig:visual} presents a visual comparison of inpainting results on a test image from FFHQ under an inpainting mask with sampling probability $p = 0.8$ and measurement noise level $\sigma_{\zbm} = 0.01$. PSNR and LPIPS metrics are reported. As expected, the OOD model performs poorly on the InD data, while adaptation using projected measurements  improves visual quality. Table~\ref{tab:inpaint_perf} quantitatively compares DPS reconstruction quality across models and measurement settings.  Notably, the adapted models show clear gains over the unadapted OOD model, confirming that even partial measurement-based adaptation helps shrink the distribution shift.

%%%%%%%%%%%%%%%%%%%%%  Table  %%%%%%%%%%%%%%%%%%%%%
\begin{table}[t]
\renewcommand{\arraystretch}{1.1}  % Increase row spacing
\centering
\caption{Comparison of InD, OOD, and Adapted models for image reconstruction using DPS~\cite{chung2023diffusion}, for inpainting with different inpainting mask probablity  and measurement noise. \hlblue{\textbf{Best}} and \hlgreen{\textbf{second best}} are shown. }
\begin{tabular}{l|cc|cc}
\toprule
\multirow{2}{*}{\textbf{Method}} 
& \multicolumn{2}{c}{\textbf{$ p = 0.8 \quad \sigma_{\zbm} = 0.01$}} 
& \multicolumn{2}{c}{\textbf{$ p = 0.9 \quad \sigma_{\zbm} = 0.00$}} \\
\cmidrule(lr){2-3} \cmidrule(lr){4-5}
  & PSNR $\uparrow$ & LPIPS$\downarrow$  & PSNR$\uparrow$  & LPIPS$\downarrow$  \\
\midrule
Microscopy        & $21.68$ &  $0.1466$   & $25.14$ & $0.0707$\\
MetFaces          & $25.49$ &  $0.0766$   & $29.60$ & $0.0342$\\
AFHQ              & $25.84$ &  $0.0614$   & $30.02$ & $0.0246$ \\
FFHQ              & \hlblue{$\mathbf{28.36}$} &  \hlblue{$\mathbf{0.0322}$}   & \hlblue{$\mathbf{33.24}$} & \hlblue{$\mathbf{0.0113}$} \\ \cdashline{1-5}
Adapt64 (AFHQ)    & $26.14$ &  $0.0530$   &  $30.23$    &   $0.0208$    \\
Adapt128 (AFHQ)   & \hlgreen{$\mathbf{26.52}$} &  \hlgreen{$\mathbf{0.0465}$}    &     \hlgreen{$\mathbf{30.37}$}  &   \hlgreen{$\mathbf{0.0187}$}    \\
\bottomrule
\end{tabular}
\label{tab:inpaint_perf}
\end{table} 
%%%%%%%%%%%%%%%%%%%%%  Table  %%%%%%%%%%%%%%%%%%%%%

%%%%%%%%%%%%%%%%%%%%%  Figure  %%%%%%%%%%%%%%%%%%%%%
\begin{figure}[t]
    \centering
    \includegraphics[width=1.0\textwidth]{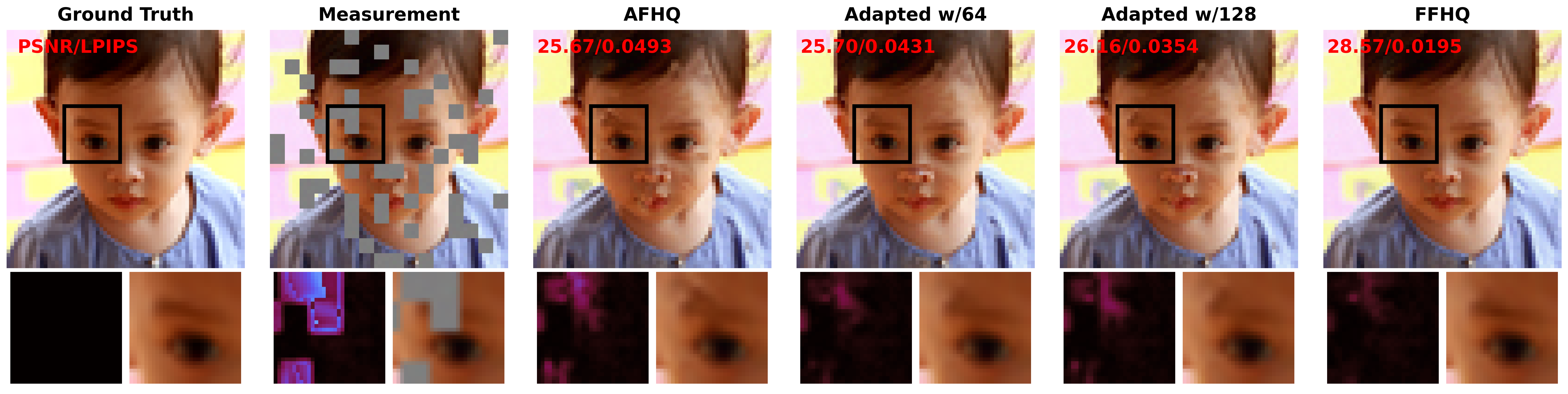 }
    \caption{\small\textit{Visual comparison of inpainting results (DPS~\cite{chung2023diffusion}) on an FFHQ image with mask rate $p = 0.8$ and measurement noise level $\sigma = 0.01$. The top row shows full reconstructions, while the bottom row displays residual maps (left) and zoomed-in regions (right). Note the performance gap between the InD and OOD models, and the improvement achieved by adapting the OOD models using only corrupted measurements.}}
    \label{fig:visual}
\end{figure}
%%%%%%%%%%%%%%%%%%%%%  Figure  %%%%%%%%%%%%%%%%%%%%%

\subsection{Ablation studies}

We study how varying the inpainting measurement probability, which controls the degree of ill-posedness, influences the accuracy of KL divergence approximation using the proposed metric. Figure~\ref{fig:diff_p} illustrates the difference between the  KL divergence computed on clean images and our metric obtained from measurements masked with varying inpainting probabilities. As expected, lower measurement corruption (i.e., higher sampling probability) leads to more accurate KL divergence estimates. However, the proposed metric remains effective in providing an approximation of the image-domain KL divergence, even under high levels of measurement corruption.
\begin{wrapfigure}{r}{0.45\textwidth}
    \centering
    \includegraphics[width=0.4\textwidth]{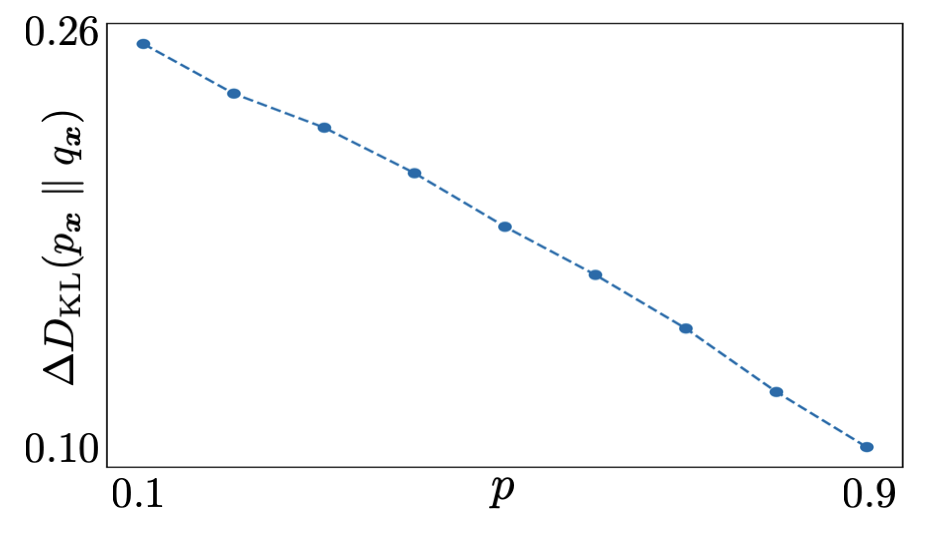}
     \caption{\small\textit{Difference between image-domain KL divergence (FFHQ vs. AFHQ) and the proposed measurement-domain approximation, plotted across varying inpainting probabilities. Smaller differences indicate better approximation; note that accuracy improves as measurement corruption decreases, while the metric remains robust even under severe corruption.}}
    \label{fig:diff_p}
\end{wrapfigure}
Table~\ref{tab:kl_ablation} presents an ablation study analyzing how the KL divergence approximation responds to measurement noise $\sigma_{\zbm}$ and number of measurement examples  $N$ from InD dataset. The results show that KL estimates remain stable even under substantial noise, supporting the robustness of the proposed metric and validating Theorem~\ref{thm:thm2}. Notably, reliable estimates are obtained with as few as 20 samples, demonstrating the metric’s effectiveness in both noisy and noiseless setting, using limited number of measurement examples from InD dataset.

\section{Conclusion}
This work presents a principled approach to unsupervised detection and quantification of distribution shift in imaging inverse problems using only corrupted measurements. By leveraging score-based diffusion models, we introduce a measurement-domain KL divergence estimator that accurately reflects the underlying distributional discrepancy without requiring access to clean test images—a major limitation in many real-world inverse problems. Our theoretical results, validated empirically on inpainting and MRI tasks, demonstrate that the proposed metric aligns well with  image-domain KL divergence and can guide adaptation strategies. Through a simple yet effective measurement-based adaptation, we show that aligning score functions reduces the estimated KL divergence and leads to improved reconstruction quality. This work not only establishes a foundation for robust evaluation of distribution shift but also offers a practical direction for mitigating its impact.

\begin{table}[t]
\centering
\renewcommand{\arraystretch}{1.1}
\setlength{\tabcolsep}{4pt}
\caption{KL divergence as a function of data examples $N$ and measurement noise level $\sigma_{\zbm}$. Note the robustness of the metric to measurement noise. Also note that limited number of corrupted measurement can approximate KL divergence.}
\begin{tabular}{!{\vrule width 0.6pt}c!{\vrule width 0.6pt}ccccc!{\vrule width 0.1pt}c!{\vrule width 0.6pt}}
\hline
\diagbox[width=4em]{\textbf{$N$}}{\textbf{$\sigma_{\zbm}$}} 
& $\mathbf{0.0}$ & $\mathbf{0.1}$ & $\mathbf{0.2}$ & $\mathbf{0.5}$ & $\mathbf{1.0}$ & $\KL$ (Img) \\
\hline
$\mathbf{20}$  & $2.098$ & $2.085$ & $2.085$ & $2.091$ & $2.114$ & $1.974$ \\
$\mathbf{40}$   & $2.070$ & $2.074$ & $2.074$ & $2.079$ & $2.102$ & $1.935$ \\
$\mathbf{80}$  & $2.063$ & $2.116$ & $2.116$ & $2.119$ & $2.140$ & $1.978$ \\
$\mathbf{120}$  & $2.073$ & $2.098$ & $2.098$ & $2.102$ & $2.124$ & $1.956$ \\
\hline
\end{tabular}
\vspace{1em}

\label{tab:kl_ablation}
\end{table}

% \section*{Acknowledgements}
\newpage
\section*{Limitations}
Despite its contributions, our approach has several limitations. 
The approach relies on diffusion models to estimate the data score function, which introduces inherent approximation errors that may affect the accuracy of the estimated KL divergence. However, this assumption is standard in recent works that leverage diffusion models for score-based inference~\cite{kadkhodaiegeneralization, song2019generative, song2023solving}, and such approximations have been shown to be effective in practice.
Moreover, our theoretical results rely on assumptions outlined in Section~\ref{ss:theory}, which hold for the considered measurement models (MRI and inpainting) but may not generalize to all practical settings—particularly those lacking shared right-singular vectors or full-rank projection matrices.

\section*{Impact Statement}
This paper introduces a reliable unsupervised method for quantifying distribution shifts in inverse problems using diffusion models—without requiring access to clean test data. This capability is especially valuable in high-stakes imaging applications, such as medical MRI, where acquiring ground truth is often impractical or impossible. Our approach enables practitioners to assess the robustness of pre-trained diffusion models, identify distribution mismatches, and apply lightweight adaptations to improve performance. While primarily focused on image inpainting and MRI, the methodology could be generalized to other measurement models.

{
\small

\bibliographystyle{IEEEbib}
\bibliography{references}

\begin{thebibliography}{10}

\bibitem{zhang2022whyfail}
H.~Zhang, H.~Singh, M.~Ghassemi, and S.~Joshi,
\newblock ``{"{W}hy Did the Model Fail?": Attributing Model Performance Changes to Distribution Shifts}'',
\newblock in {\em Proc. ICML},
\newblock 2023.

\bibitem{yang2024generalized}
J.~Yang, K.~Zhou, Y.~Li, and Z.~Liu,
\newblock ``{Generalized Out-of-Distribution Detection: A Survey},''
\newblock {\em International Journal of Computer Vision}, vol. 132, no. 12, pp. 5635--5662, 2024.

\bibitem{wiles2022a}
O.~Wiles, S.~Gowal, F.~Stimberg, S.-A. Rebuffi, I.~Ktena, K.~D. Dvijotham, and A.~T. Cemgil,
\newblock ``{A Fine-Grained Analysis on Distribution Shift}'',
\newblock in {\em Proc. ICLR},
\newblock 2022.

\bibitem{peng2021wild}
P.~W. Koh and et. al,
\newblock ``{WILDS: A Benchmark of in-the-Wild Distribution Shifts}'',
\newblock in {\em Proc. ICML},
\newblock 2021.

\bibitem{chen2021mandoline}
M.~Chen, K.~Goel, N.~S. Sohoni, F.~Poms, K.~Fatahalian, and C.~Re,
\newblock ``{Mandoline: Model Evaluation under Distribution Shift}'',
\newblock in {\em Proc. ICML},
\newblock 2021.

\bibitem{yang2022openood}
J.~Yang, P.~Wang, D.~Zou, Z.~Zhou, K.~Ding, W.~Peng, H.~Wang, G.~Chen, B.~Li, Y.~Sun, et~al.,
\newblock ``{Openood: Benchmarking Generalized Out-of-Distribution Detection}'',
\newblock in {\em Proc. NeurIPS},
\newblock 2022.

\bibitem{salehi2022unified}
M.~Salehi, H.~Mirzaei, D.~Hendrycks, Y.~Li, M.~Rohban, M.~Sabokrou, et~al.,
\newblock ``{A Unified Survey on Anomaly, Novelty, Open-Set, and Out of-Distribution Detection: Solutions and Future Challenges},''
\newblock {\em TMLR}, 2022.

\bibitem{hendrycks2017a}
D.~Hendrycks and K.~Gimpel,
\newblock ``{A Baseline for Detecting Misclassified and Out-of-Distribution Examples in Neural Networks}'',
\newblock in {\em Proc. ICLR},
\newblock 2017.

\bibitem{liang2018enhancing}
S.~Liang, Y.~Li, and R.~Srikant,
\newblock ``{Enhancing The Reliability of Out-of-distribution Image Detection in Neural Networks}'',
\newblock in {\em Proc. ICLR},
\newblock 2018.

\bibitem{ho2020denoising}
J.~Ho, A.~Jain, and P.~Abbeel,
\newblock ``{Denoising Diffusion Probabilistic Models}'',
\newblock in {\em Proc. NeurIPS},
\newblock 2020.

\bibitem{song2020score}
Y.~Song, J.~Sohl-Dickstein, D.~P. Kingma, A.~Kumar, S.~Ermon, and B.~Poole,
\newblock ``Score-based generative modeling through stochastic differential equations'',
\newblock in {\em International Conference on Learning Representations},
\newblock 2020.

\bibitem{vahdat2021score}
A.~Vahdat, K.~Kreis, and J.~Kautz,
\newblock ``{Score-based Generative Modeling in Latent Space}'',
\newblock in {\em Proc. NeurIPS},
\newblock 2021.

\bibitem{dhariwal2021diffusion}
P.~Dhariwal and A.~Nichol,
\newblock ``{Diffusion Models Beat {GANs} on Image Synthesis}'',
\newblock in {\em Proc. NeurIPS},
\newblock 2021.

\bibitem{latent_diffusion}
R.~Rombach, A.~Blattmann, D.~Lorenz, P.~Esser, and B.~Ommer,
\newblock ``{High-resolution Image Synthesis with Latent Diffusion Models}'',
\newblock in {\em Proc. CVPR},
\newblock 2022.

\bibitem{Karras2022edm}
T.~Karras, M.~Aittala, T.~Aila, and S.~Laine,
\newblock ``{Elucidating the Design Space of Diffusion-Based Generative Models}'',
\newblock in {\em Proc. NeurIPS},
\newblock 2022.

\bibitem{kim2022refining}
D.~Kim, Y.~Kim, W.~Kang, and I.~C. Moon,
\newblock ``{Refining Generative Process with Discriminator Guidance in Score-based Diffusion Models}'',
\newblock in {\em Proc. ICML},
\newblock 2023.

\bibitem{chung2023diffusion}
H.~Chung, J.~Kim, M.~T. Mccann, M.~L. Klasky, and J.~C. Ye,
\newblock ``{Diffusion Posterior Sampling for General Noisy Inverse Problems}'',
\newblock in {\em Proc. ICLR},
\newblock 2023.

\bibitem{chung2024decomposed}
H.~Chung, S.~Lee, and J.~C. Ye,
\newblock ``{Decomposed Diffusion Sampler for Accelerating Large-Scale Inverse Problems}'',
\newblock in {\em Proc. ICLR},
\newblock 2024.

\bibitem{chung2023solving}
H.~Chung, D.~Ryu, M.~T. McCann, M.~L. Klasky, and J.~C. Ye,
\newblock ``{Solving {3D} Inverse Problems using Pre-trained {2D} Diffusion Models}'',
\newblock in {\em Proc. CVPR},
\newblock 2023.

\bibitem{chung2022score}
H.~Chung and J.~C. Ye,
\newblock ``{Score-based Diffusion Models for Accelerated {MRI}},''
\newblock {\em Med. Image Anal.}, vol. 80, pp. 102479, 2022.

\bibitem{xie2022measurement}
Y.~Xie and Q.~Li,
\newblock ``{Measurement-conditioned Denoising Diffusion Probabilistic Model for Under-sampled Medical Image Reconstruction}'',
\newblock in {\em MICCAI},
\newblock 2022.

\bibitem{li2023descod}
H.~Li, G.~Ditzler, J.~Roveda, and A.~Li,
\newblock ``{{DeScoD-ECG}: Deep Score-Based Diffusion Model for {ECG} Baseline Wander and Noise Removal},''
\newblock {\em IEEE J. Biomed. Health Inform.}, vol. 28, no. 9, pp. 5081--5091, 2024.

\bibitem{adib2023synthetic}
E.~Adib, A.~Fernandez, F.~Afghah, and J.~J. Prevost,
\newblock ``{Synthetic {ECG} Signal Generation using Probabilistic Diffusion Models},''
\newblock {\em IEEE Access}, vol. 11, pp. 75818--75828, 2023.

\bibitem{daras2024survey}
G.~Daras, H.~Chung, C.-H. Lai, Y.~Mitsufuji, J.~C. Ye, P.~Milanfar, A.~G. Dimakis, and M.~Delbracio,
\newblock ``{A Survey on Diffusion Models for Inverse Problems},''
\newblock {\em arXiv:2410.00083}, 2024.

\bibitem{kazerouni2023diffusion}
A.~Kazerouni, E.~Aghdam, M.~Heidari, R.~Azad, M.~Fayyaz, I.~Hacihaliloglu, and D.~Merhof,
\newblock ``{Diffusion Models in Medical Imaging: A Comprehensive Survey},''
\newblock {\em Med. Image Anal.}, vol. 88, pp. 102846--102846, 2023.

\bibitem{croitoru2023diffusion}
F.-A. Croitoru, V.~Hondru, R.~T. Ionescu, and M.~Shah,
\newblock ``{Diffusion Models in Vision: A survey},''
\newblock {\em IEEE Trans. PAMI}, vol. 45, no. 9, pp. 10850--10869, 2023.

\bibitem{li2023diffusion}
X.~Li, Y.~Ren, X.~Jin, C.~Lan, X.~Wang, W.~Zeng, X.~Wang, and Z.~Chen,
\newblock ``{Diffusion Models for Image Restoration and Enhancement--A Comprehensive Survey},''
\newblock {\em arXiv e-prints}, pp. arXiv--2308, 2023.

\bibitem{heng2024out}
A.~Heng, H.~Soh, et~al.,
\newblock ``{Out-of-distribution Detection with a Single Unconditional Diffusion Model}'',
\newblock in {\em Proc. NeurIPS},
\newblock 2024.

\bibitem{graham2023denoising}
M.~S. Graham, W.~H. Pinaya, P.-D. Tudosiu, P.~Nachev, S.~Ourselin, and J.~Cardoso,
\newblock ``{Denoising Diffusion Models for Out-of-distribution Detection}'',
\newblock in {\em Proc. CVPR},
\newblock 2023.

\bibitem{liu2023unsupervised}
Z.~Liu, J.~P. Zhou, Y.~Wang, and K.~Q. Weinberger,
\newblock ``{Unsupervised Out-of-Distribution Detection with Diffusion Inpainting}'',
\newblock in {\em Proc. ICLR},
\newblock 2023.

\bibitem{livernochediffusion}
V.~Livernoche, V.~Jain, Y.~Hezaveh, and S.~Ravanbakhsh,
\newblock ``{On Diffusion Modeling for Anomaly Detection}'',
\newblock in {\em Proc. ICLR},
\newblock 2023.

\bibitem{farahani2021brief}
A.~Farahani, S.~Voghoei, K.~Rasheed, and H.~R. Arabnia,
\newblock ``{A Brief Review of Domain Adaptation}'',
\newblock in {\em Advances in Data Science and Information Engineering},
\newblock 2021.

\bibitem{zhang2022transfer}
L.~Zhang and X.~Gao,
\newblock ``{Transfer Adaptation Learning: A Decade Survey},''
\newblock {\em IEEE Trans. Neural Netw. Learn. Sys}, vol. 35, no. 1, pp. 23--44, 2022.

\bibitem{Csurka2017}
G.~Csurka,
\newblock {\em {A Comprehensive Survey on Domain Adaptation for Visual Applications}}, pp. 1--35,
\newblock Springer International Publishing, Cham, 2017.

\bibitem{chung2024deep}
H.~Chung and J.~C. Ye,
\newblock ``{Deep Diffusion Image Prior for Efficient {OOD} Adaptation in 3D Inverse Problems}'',
\newblock in {\em Proc. ECCV},
\newblock 2024.

\bibitem{barbano2025steerable}
R.~Barbano, A.~Denker, H.~Chung, T.~H. Roh, S.~Arridge, P.~Maass, B.~Jin, and J.~C. Ye,
\newblock ``{Steerable Conditional Diffusion for Out-of-Distribution Adaptation in Medical Image Reconstruction},''
\newblock {\em IEEE Trans. Med. Img.}, 2025.

\bibitem{darestani22a}
M.~Z. Darestani, J.~Liu, and R.~Heckel,
\newblock ``{Test-Time Training Can Close the Natural Distribution Shift Performance Gap in Deep Learning Based Compressed Sensing}'',
\newblock in {\em Proc. ICML},
\newblock 17--23 Jul 2022.

\bibitem{Robbins1956Empirical}
H.~Robbins,
\newblock ``{An Empirical Bayes Approach to Statistics}'',
\newblock in {\em Proc Third Berkeley Symposium on Mathematical Statistics and Probability},
\newblock 1956.

\bibitem{Miyasawa61}
K.~Miyasawa,
\newblock ``{An Empirical {Bayes} Estimator of the Mean of a Normal Population},''
\newblock {\em Bull. Inst. Internat. Statist.}, vol. 38, pp. 181--188, 1961.

\bibitem{vincent2011connection}
P.~Vincent,
\newblock ``{A Connection Between Score Matching and Denoising Autoencoders},''
\newblock {\em Neural Computation}, vol. 23, no. 7, pp. 1661--1674, 2011.

\bibitem{Raphan10}
M.~Raphan and E.~P. Simoncelli,
\newblock ``"{Least Squares Estimation without Priors or Supervision}",''
\newblock {\em Neural Computation}, vol. 23, no. 2, pp. 374--420, Feb 2011.

\bibitem{song2021maximum}
Y.~Song, C.~Durkan, I.~Murray, and S.~Ermon,
\newblock ``{Maximum Likelihood Training of Score-based Diffusion Models}'',
\newblock in {\em Proc. NeurIPS},
\newblock 2021.

\bibitem{kadkhodaiegeneralization}
Z.~Kadkhodaie, F.~Guth, E.~P. Simoncelli, and S.~Mallat,
\newblock ``{Generalization in Diffusion Models Arises from Geometry-adaptive Harmonic Representations}'',
\newblock in {\em Proc. ICLR},
\newblock 2024.

\bibitem{kawar2022denoising}
B.~Kawar, M.~Elad, S.~Ermon, and J.~Song,
\newblock ``Denoising diffusion restoration models'',
\newblock in {\em Proc. NeurIPS},
\newblock 2022.

\bibitem{kawargsure}
B.~Kawar, N.~Elata, T.~Michaeli, and M.~Elad,
\newblock ``{GSURE-Based Diffusion Model Training with Corrupted Data},''
\newblock {\em TMLR}, 2023.

\bibitem{aggarwal2022ensure}
H.~K. Aggarwal, A.~Pramanik, M.~John, and M.~Jacob,
\newblock ``{ENSURE}: A general approach for unsupervised training of deep image reconstruction algorithms,''
\newblock {\em IEEE Transactions on Medical Imaging}, 2022.

\bibitem{tachella2023sensing}
J.~Tachella, D.~Chen, and M.~Davies,
\newblock ``{Sensing Theorems for Unsupervised Learning in Linear Inverse Problems},''
\newblock {\em J. Mach. Learn. Res.}, vol. 24, no. 39, pp. 1--45, 2023.

\bibitem{candes2006stable}
E.~J. Candes, J.~K. Romberg, and T.~Tao,
\newblock ``{Stable Signal Recovery from Incomplete and Inaccurate Measurements},''
\newblock {\em Commun. Pure Appl. Math.}, vol. 59, no. 8, pp. 1207--1223, 2006.

\bibitem{karras2019style}
T.~Karras, S.~Laine, and T.~Aila,
\newblock ``{A Style-based Generator Architecture for Generative Adversarial Networks}'',
\newblock in {\em Proc. CVPR},
\newblock 2019.

\bibitem{choi2020starganv2}
Y.~Choi, Y.~Uh, J.~Yoo, and J.-W. Ha,
\newblock ``{StarGAN v2: Diverse Image Synthesis for Multiple Domains}'',
\newblock in {\em Proc. CVPR},
\newblock 2020.

\bibitem{karras2020training}
T.~Karras, M.~Aittala, J.~Hellsten, S.~Laine, J.~Lehtinen, and T.~Aila,
\newblock ``{Training Generative Adversarial Networks with Limited Data}'',
\newblock in {\em Proc. NeurIPS},
\newblock 2020.

\bibitem{ChenCHAMMI2023}
Z.~Chen, C.~Pham, S.~Wang, M.~Doron, N.~Moshkov, B.~A. Plummer, and J.~C. Caicedo,
\newblock ``{CHAMMI: A Benchmark for Channel-adaptive Models in Microscopy Imaging}'',
\newblock in {\em Proc. NeurIPS},
\newblock 2023.

\bibitem{hammernik2018learning}
K.~Hammernik, T.~Klatzer, E.~Kobler, M.~P. Recht, D.~K. Sodickson, T.~Pock, and F.~Knoll,
\newblock ``{Learning a Variational Network for Reconstruction of Accelerated MRI Data},''
\newblock {\em Magn. Reson. Med.}, vol. 79, no. 6, pp. 3055--3071, 2018.

\bibitem{jalal2021robust}
A.~Jalal, M.~Arvinte, G.~Daras, E.~Price, A.~G. Dimakis, and J.~Tamir,
\newblock ``{Robust Compressed Sensing MRI with Deep Generative Priors}'',
\newblock in {\em Proc. NeurIPS},
\newblock 2021.

\bibitem{knoll2020fastmri}
F.~Knoll and et. al,
\newblock ``{fastMRI: A Publicly Available Raw k-Space and {DICOM} Dataset of Knee Images for Accelerated {MR} Image Reconstruction Using Machine Learning},''
\newblock {\em Radiology: Artificial Intelligence}, vol. 2, no. 1, pp. e190007, 2020.

\bibitem{zbontar2019fastmri}
J.~Zbontar and et. al.,
\newblock ``{fastMRI: An Open Dataset and Benchmarks for Accelerated {MRI}}'',
\newblock in {\em arXiv:1811.08839},
\newblock 2019.

\bibitem{song2019generative}
Y.~Song and S.~Ermon,
\newblock ``{Generative Modeling by Estimating Gradients of the Data Distribution}'',
\newblock in {\em Proc. NeurIPS},
\newblock 2019.

\bibitem{song2023solving}
Y.~Song, L.~Shen, L.~Xing, and S.~Ermon,
\newblock ``{Solving Inverse Problems in Medical Imaging with Score-Based Generative Models}'',
\newblock in {\em Proc. ICLR},
\newblock 2023.

\bibitem{rohan2020measuring}
R.~Taori, A.~Dave, V.~Shankar, N.~Carlini, B.~Recht, and L.~Schmidt,
\newblock ``{Measuring Robustness to Natural Distribution Shifts in Image Classification}'',
\newblock in {\em Proc. NeurIPS},
\newblock 2020.

\bibitem{malinin2021shifts}
A.~Malinin and et. al,
\newblock ``{Shifts: A Dataset of Real Distributional Shift Across Multiple Large-Scale Tasks}'',
\newblock in {\em Proc. NeurIPS},
\newblock 2021.

\bibitem{kulinski2023towards}
S.~Kulinski and D.~I. Inouye,
\newblock ``{Towards Explaining Distribution Shifts}'',
\newblock in {\em Proc. ICLR},
\newblock 2023.

\bibitem{Peng2022OODdetection}
P.~Cui and J.~Wang,
\newblock ``{Out-of-Distribution {(OOD)} Detection Based on Deep Learning: A Review},''
\newblock {\em Electronics}, vol. 11, no. 21, 2022.

\bibitem{fang2022learnableOODdetection}
Z.~Fang, Y.~Li, J.~Lu, J.~Dong, B.~Han, and F.~Liu,
\newblock ``{Is Out-of-Distribution Detection Learnable?}'',
\newblock in {\em Proc. NeurIPS},
\newblock 2022.

\bibitem{fort2021explore}
S.~Fort, J.~Ren, and B.~Lakshminarayanan,
\newblock ``{Exploring the Limits of Out-of-Distribution Detection}'',
\newblock in {\em Proc. NeurIPS},
\newblock 2021.

\bibitem{leBellier2024detectingood}
G.~Le~Bellier and N.~Audebert,
\newblock ``{Detecting Out-Of-Distribution Earth Observation Images with Diffusion Models}'',
\newblock in {\em Proc. CVPR Workshops},
\newblock 2024.

\bibitem{Graham2023denoisingdiff}
M.~S. Graham, W.~H. Pinaya, P.-D. Tudosiu, P.~Nachev, S.~Ourselin, and J.~Cardoso,
\newblock ``{Denoising Diffusion Models for Out-of-Distribution Detection}'',
\newblock in {\em Proc. CVPR Workshops},
\newblock June 2023.

\bibitem{Gao2023semantic}
R.~Gao, C.~Zhao, L.~Hong, and Q.~Xu,
\newblock ``{DIFFGUARD: Semantic Mismatch-Guided Out-of-Distribution Detection Using Pre-Trained Diffusion Models}'',
\newblock in {\em Proc. ICCV},
\newblock 2023.

\bibitem{livernoche2024on}
V.~Livernoche, V.~Jain, Y.~Hezaveh, and S.~Ravanbakhsh,
\newblock ``{On Diffusion Modeling for Anomaly Detection}'',
\newblock in {\em Proc. ICLR},
\newblock 2024.

\bibitem{Kang2019contrastive}
G.~Kang, L.~Jiang, Y.~Yang, and A.~G. Hauptmann,
\newblock ``{Contrastive Adaptation Network for Unsupervised Domain Adaptation}'',
\newblock in {\em Proc. CVPR},
\newblock June 2019.

\bibitem{ganin2015unsupervised}
Y.~Ganin and V.~Lempitsky,
\newblock ``{Unsupervised Domain Adaptation by Backpropagation}'',
\newblock in {\em Proc. ICML},
\newblock 2015.

\bibitem{You2019universal}
K.~You, M.~Long, Z.~Cao, J.~Wang, and M.~I. Jordan,
\newblock ``{Universal Domain Adaptation}'',
\newblock in {\em Proc. CVPR},
\newblock 2019.

\bibitem{sun2016return}
B.~Sun, J.~Feng, and K.~Saenko,
\newblock ``{Return of Frustratingly Easy Domain Adaptation}'',
\newblock in {\em Proc. AAAI},
\newblock 2016.

\bibitem{shai2006analysis}
S.~Ben-David, J.~Blitzer, K.~Crammer, and F.~Pereira,
\newblock ``{Analysis of Representations for Domain Adaptation}'',
\newblock in {\em Proc. NeurIPS},
\newblock 2006.

\bibitem{gilton2021adaptation}
D.~Gilton, G.~Ongie, and R.~Willett,
\newblock ``{Model Adaptation for Inverse Problems in Imaging},''
\newblock {\em IEEE Trans. Com. Img.}, vol. 7, pp. 661--674, 2021.

\bibitem{Yismaw2024domain}
N.~Yismaw, U.~S. Kamilov, and M.~S. Asif,
\newblock ``{Domain Expansion via Network Adaptation for Solving Inverse Problems},''
\newblock {\em IEEE Trans. Com. Img.}, vol. 10, pp. 549--559, 2024.

\bibitem{shoushtari2024prior}
S.~Shoushtari, J.~Liu, E.~P. Chandler, M.~S. Asif, and U.~S. Kamilov,
\newblock ``{Prior Mismatch and Adaptation in {PnP-ADMM} with a Nonconvex Convergence Analysis}'',
\newblock in {\em Proc. ICML},
\newblock 2024.

\end{thebibliography}

}

% \input{checklist}

%%%%%%%%%%%%%%%%%%%%%%%%%%%%%%%%%%%%%%%%%%%%%%%%%%%%%%%%%%%%%%%%%%%%%%%%%%%%%%%
%%%%%%%%%%%%%%%%%%%%%%%%%%%%%%%%%%%%%%%%%%%%%%%%%%%%%%%%%%%%%%%%%%%%%%%%%%%%%%%
% APPENDIX
%%%%%%%%%%%%%%%%%%%%%%%%%%%%%%%%%%%%%%%%%%%%%%%%%%%%%%%%%%%%%%%%%%%%%%%%%%%%%%%
%%%%%%%%%%%%%%%%%%%%%%%%%%%%%%%%%%%%%%%%%%%%%%%%%%%%%%%%%%%%%%%%%%%%%%%%%%%%%%%
\newpage
\appendix
\section{Proof of KL Divergence Metric on Image Domain }\label{app:proofKLDiv}
 The following proof and~\cref{eq:KLforimage} results from Theorem 1 of~\cite{song2021maximum} and it is also briefly discussed in~\cite{kadkhodaiegeneralization}.  Let $\nabla_\xsig \log \pxsig$ and  $\nabla_\xsig \log \qxsig$ represent the score of InD  $\px$ and OOD  $\qx$, respectively. The distribution shift measured by KL divergence between density functions $\px$ and $\qx$ can be obtained as
\begin{equation}
   \infdiv{\px}{\qx} = \int_0^\infty \E_{\xbm\sim p(\xbm),\xsig\sim p(\xsig|\xbm)  } \left [\|\nabla_\xsig \log \pxsig- \nabla_\xsig \log \qxsig\|_2^2  \right]~\sigma~ \dd\sigma.
\end{equation}
%%%%%%%%%%%%%%%%%%%%%%%%%%%%%%%%%%%%%%%%%%%%%%%%%%%%%%%%%%%%%%%%%%%%%%%%%%%%%%%

\begin{proof}
Using the fact that $\xbm = \xbm_{\sigma_0}$, we have
\begin{align}\label{eq:DLKorig}
    &\nonumber \infdiv{\px}{\qx} \\&=  \infdiv{p(\xbm_{\sigma_0})}{q(\xbm_{\sigma_0})} -\infdiv{p(\xbm_{\sigma_\infty})}{q(\xbm_{\sigma_\infty})} + \infdiv{p(\xbm_{\sigma_\infty})}{q(\xbm_{\sigma_\infty})}\\
    & = \int_\infty^0 \frac{\partial \infdiv{p(\xbm_{\sigma})}{q(\xbm_{\sigma})}}{\partial \sigma} \dd \sigma, 
    % & = \int_\infty^0 \frac{\partial \infdiv{p(\xbm_{\sigma_t})}{q(\xbm_{\sigma_t})}}{\partial \sigma_t} \dd t, I
\end{align}
where in the last line, we used the fundamental theorem of calculus and  in the last line, we used the fact that $p(\xbm_{\sigma_\infty}) =  q(\xbm_{\sigma_\infty}) \approx \Ncal(0, \Ibm)$.

We calculate the derivative of $\infdiv{\pxsig}{\qxsig}$ using chain and quotient rule as:  

\begin{align} \label{eq:derSigDKL}
    \nonumber \frac{\partial \infdiv{p(\xbm_{\sigma})}{q(\xbm_{\sigma})}}{\partial \sigma} 
    & = \frac{\partial }{\partial \sigma} \int_{\R^n}\pxsig \log \frac{\pxsig}{\qxsig} \dxsig \\
    \nonumber&=  \int \frac{\partial \pxsig}{\partial \sigma} \log \frac{\pxsig}{\qxsig} \dxsig +  \int \frac{\partial \pxsig}{\partial \sigma} \dxsig -  \int \frac{\partial \qxsig}{\partial \sigma}  \frac{\pxsig}{\qxsig} \dxsig\\
    & = \int \frac{\partial \pxsig}{\partial \sigma} \log \frac{\pxsig}{\qxsig} \dxsig   -  \int \frac{\partial \qxsig}{\partial \sigma}  \frac{\pxsig}{\qxsig} \dxsig, 
\end{align}
where in the last line, we used the fact that $\int \pxsig \dxsig = 1$. 

% From~\cref{eq:ProbNoisy}, we have 
% \begin{align}\label{eq:nosiederNoiseCorDist}
%     \nonumber \frac{\partial \pxsig}{\partial \sigma} &= \int \frac{\partial G_\sigma(\xsig - \xbm)}{\partial \sigma} \px\dx = \int \left \{  \frac{-1}{\sigma} +  \frac{\|\xsig - \xbm\|_2^2}{\sigma^3}\right\} G_\sigma(\xsig - \xbm)\px\dx \\
%     & = \frac{-1}{\sigma} \pxsig + \frac{\xsig^{\Tsf}\xsig}{\sigma^3}\pxsig - \frac{2\xsig^{\Tsf} \E[\xbm|\xsig]}{\sigma^3}\pxsig  + \frac{\E[\xbm^\Tsf \xbm|\xsig]}{\sigma^3}\pxsig. 
% \end{align}
% On the other hand, we have 
% \begin{equation}
%     \nabla_\xsig \pxsig = \int \nabla_\xsig G_\sigma(\xsig - \xbm) \px \dx = \int \frac{\xbm - \xsig}{\sigma^2} G_\sigma(\xsig - \xbm) \px \dx, 
% \end{equation}
% and 
% \begin{align}\label{eq:HessianNoiseCorDist}
%     \nonumber \nabla^2_\xsig \pxsig & =  \int \left \{ \frac{\|\xbm - \xsig\|^2}{\sigma^4} - \frac{1}{\sigma^2}\right \} G_\sigma(\xsig - \xbm) \px \dx\\
%     & = \frac{-1}{\sigma^2} \pxsig + \frac{\xsig^{\Tsf}\xsig}{\sigma^4}\pxsig - \frac{2\xsig^{\Tsf} \E[\xbm|\xsig]}{\sigma^4}\pxsig  + \frac{\E[\xbm^\Tsf \xbm|\xsig]}{\sigma^4}\pxsig. 
% \end{align}
From Fokker-Planck equation for $n$-dimensional vector $\xsig$ for the diffusion coefficient $\sigma$, we have 
\begin{equation}
    \frac{\partial \pxsig}{\partial \sigma} = \sigma \nabla^2_\xsig \pxsig. 
\end{equation}
Plugging this results in  the first term of~\cref{eq:derSigDKL} yields 
\begin{align}\label{eq:deBrujinIdenFor1}
    \nonumber \int \frac{\partial \pxsig}{\partial \sigma} \log \frac{\pxsig}{\qxsig} \dxsig & = \int  \sigma \nabla^2_\xsig \pxsig \log \frac{\pxsig}{\qxsig} \dxsig  \\
    \nonumber & = \sigma \lim_{\stackunder{\scriptscriptstyle{\abm \to \infty}} {\scriptscriptstyle{\bbm\to -\infty}}}
    \left [\nabla_\xsig \pxsig \log \frac{\pxsig}{\qxsig}\right]^\abm_\bbm \\
     \nonumber & \qquad  - \sigma \int \nabla_\xsig \pxsig^\Tsf \left[ \nabla_\xsig \log \pxsig- \nabla_\xsig \log \qxsig \right] \dxsig\\
     \nonumber& = - \sigma \int \nabla_\xsig \pxsig^\Tsf \left[ \nabla_\xsig \log \pxsig - \nabla_\xsig \log \qxsig \right] \dxsig\\
      & = - \sigma \int \nabla_\xsig \log \pxsig^\Tsf \left[ \nabla_\xsig \log \pxsig - \nabla_\xsig \log \qxsig \right] \pxsig \dxsig,
\end{align}
where we used integration by parts and the fact the the first term vanishes when both $\px$ and $\qx$ and their derivatives decays rapidly at $\pm \infty$. Note that in the last equality, we used the fact that $ \nabla_\xsig \log \pxsig \pxsig = \nabla_\xsig \pxsig $. We also have $\frac{\partial \qxsig}{\partial \sigma} = \sigma \nabla^2_\xsig \qxsig $, which yields
\begin{align}\label{eq:deBrujinIdenFor2}
     \nonumber \int \frac{\partial \qxsig}{\partial \sigma}  \frac{\pxsig}{\qxsig} \dxsig &=  \int  \sigma \nabla^2_\xsig \qxsig \frac{\pxsig}{\qxsig} \dxsig  \\
    \nonumber & = \sigma \lim_{\stackunder{\scriptscriptstyle{\abm \to \infty}} {\scriptscriptstyle{\bbm\to -\infty}}}
    \left [\nabla_\xsig \qxsig  \frac{\pxsig}{\qxsig}\right]^\abm_\bbm \\
    \nonumber & \qquad - \sigma \int \nabla_\xsig \qxsig^\Tsf \left[ \frac{\nabla_\xsig \pxsig} {\qxsig}- \frac{\nabla_\xsig \qxsig} {\qxsig} \frac{\pxsig}{\qxsig}\right] \dxsig\\
    \nonumber& = -\sigma \int \nabla_\xsig \qxsig^\Tsf \left[ \frac{\nabla_\xsig \pxsig} {\qxsig}- \frac{\nabla_\xsig \qxsig} {\qxsig} \frac{\pxsig}{\qxsig} \right] \dxsig \\
    \nonumber& = -\sigma \int \nabla_\xsig \qxsig^\Tsf \left[ \nabla_\xsig \log \pxsig- \nabla_\xsig \log \qxsig \right] \frac{\pxsig}{\qxsig} \dxsig \\
    & = -\sigma \int \nabla_\xsig \log \qxsig^\Tsf \left[ \nabla_\xsig \log \pxsig- \nabla_\xsig \log \qxsig \right] \pxsig \dxsig . 
\end{align}
Putting~\cref{eq:deBrujinIdenFor1} and~\cref{eq:deBrujinIdenFor2} in~\cref{eq:derSigDKL} establishes that 
\begin{align}
     \nonumber \frac{\partial \infdiv{p(\xbm_{\sigma})}{q(\xbm_{\sigma})}}{\partial \sigma} 
  & = -\sigma \int \pxsig \|\nabla_\xsig \log \pxsig- \nabla_\xsig \log \qxsig\|_2^2 \dxsig\\
 \nonumber &= -\sigma \E \left [\|\nabla_\xsig \log \pxsig- \nabla_\xsig \log \qxsig\|_2^2.\right]
\end{align}
Replacing this equation in~\cref{eq:DLKorig} establishes the desired result:
\begin{align}
\nonumber\infdiv{\px}{\qx} &= \int_\infty^0 \frac{\partial \infdiv{p(\xbm_{\sigma})}{q(\xbm_{\sigma})}}{\partial \sigma} \dd \sigma\\ 
& = \int^\infty_0 \E \left [\|\nabla_\xsig \log \pxsig- \nabla_\xsig \log \qxsig\|_2^2 \right] \sigma \dd \sigma.
\end{align}
% Note that one can use the Tweedie's formula to replace the score functions with denoisers of diffuison model to compute the KL divergence, giving
% \begin{equation}
%    \nonumber\infdiv{\px}{\qx} = \int^\infty_0 \E \left [\| \Dsf_\sigma(\xsig) - \Dsfhat_\sigma(\xsig) \|_2^2 \right] \sigma^{-3} \dd \sigma.
% \end{equation}
\end{proof}

\section{Proof of Theorem~\ref{thm:thm1}}\label{app:proofthm1}

%%%%%%%%%%%%%%%%%%%%%  Theorem %%%%%%%%%%%%%%%%%%%%%
\begin{theoremsup}
Let $\ybmbar_\sigma = \Pbm \xbmbar + \nbmbar$ denote the noisy projected measurements at noise level $\sigma$ according to~\cref{eq:ybarsigma} and $\xbm \sim p(\xbm)$. Then, the KL divergence between the InD density $\px$ and the OOD density $\qx$ can be expressed as
\begin{align*}
    \infdiv{\px}{\qx} =  \int_0^\infty \E \big[\|\Wbm(\nabla \log p(\Vbm\ybmbar_\sigma) - \nabla \log q(\Vbm\ybmbar_\sigma))\|_2^2 \big]\sigma~ \dd\sigma, 
\end{align*}
where $\Wbm = \E[\Pbm]^{-3/2}$ is a diagonal weight matrix, and expectation is taken over $\Pbm$, $\xbm \sim p(\xbm)$, and $\ybmbar \sim p(\ybmbar|\xbm, \Pbm)$.
\end{theoremsup}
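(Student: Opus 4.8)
The plan is to reduce the claim to the image-domain identity of Appendix~\ref{app:proofKLDiv} by exploiting the shared right-singular basis and then accounting for the random projection with a reweighting argument. First I would rotate every quantity into the right-singular coordinates. Since Assumption~\ref{as:two} guarantees a common $\Vbm$, the map $\xbm\mapsto\xbmbar=\Vbm^\Tsf\xbm$ is a single fixed orthogonal change of variables, and KL divergence is invariant under it, so $\infdiv{\px}{\qx}=\infdiv{\bar p}{\bar q}$, where $\bar p,\bar q$ are the pushforwards of $\px,\qx$. Applying the image-domain formula \cref{eq:KLforimage} in these coordinates expresses the divergence as $\int_0^\infty\E[\|\nabla\log\bar p_\sigma(\xbmbar_\sigma)-\nabla\log\bar q_\sigma(\xbmbar_\sigma)\|_2^2]\,\sigma\,\dd\sigma$, with $\xbmbar_\sigma=\xbmbar+\nbm$ the fully noised rotated signal. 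The chain rule for the orthogonal map then identifies the rotated score with the pretrained image-space score evaluated at $\Vbm\,\cdot\,$, i.e.\ $\nabla\log\bar p_\sigma(\bm{u})=\Vbm^\Tsf(\nabla\log p_\sigma)(\Vbm\bm{u})$, which is precisely what lets the pretrained diffusion models be fed the rotated measurements $\Vbm\ybmbar_\sigma$.

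The heart of the argument is to pass from the fully noised point $\xbmbar_\sigma$, which is unavailable, to the observed projected measurement $\ybmbar_\sigma=\Pbm\xbmbar_\sigma$, at the cost of introducing an expectation over the random operator. Since $\Pbm$ is diagonal with $0/1$ entries in the rotated basis, I would work coordinatewise and use the elementary reweighting identity $\Ibm=\E[\Pbm]^{-1}\,\E_\Pbm[\Pbm]$, which is well defined exactly because Assumption~\ref{as:one} makes $\E[\Pbm]$ invertible on the signal space. Writing $\Delta:=\nabla\log\bar p_\sigma-\nabla\log\bar q_\sigma$, this turns the full-coordinate norm $\|\Delta(\xbmbar_\sigma)\|_2^2$ into $\E_\Pbm\big[\Delta(\xbmbar_\sigma)^\Tsf\E[\Pbm]^{-1}\Pbm\,\Delta(\xbmbar_\sigma)\big]$, a $\Pbm$-masked, $\E[\Pbm]^{-1}$-weighted norm in which only the observed coordinates contribute and each is rescaled to compensate for how often it is sampled. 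This is the mechanism that redistributes the image-domain integrand across the draws $\Hbm\sim p(\Hbm)$ and produces the diagonal weight $\Wbm$.

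The remaining and most delicate step is to replace the masked score difference $\Pbm\Delta$ evaluated at the full point $\xbmbar_\sigma$ by the score difference evaluated at the projected point, $\Delta(\Pbm\xbmbar_\sigma)=\Delta(\ybmbar_\sigma)$, and to account for the extra factor of $\E[\Pbm]^{-1}$ that upgrades the naive weight $\E[\Pbm]^{-1/2}$ to the stated $\Wbm=\E[\Pbm]^{-3/2}$. I expect this to be the main obstacle: it requires relating the pretrained score evaluated on a zero-padded measurement to the score of the observed coordinates, via the marginal-score identity $\nabla\log\bar p_\sigma^{S}=\E[\nabla\log\bar p_\sigma(\xbmbar_\sigma)\mid(\xbmbar_\sigma)_S]$ for the observed set $S$, together with careful bookkeeping of the projected diffusion noise $\nbmbar=\Pbm\nbm\sim\Ncal(0,\sigma^2\Pbm)$ and of the normalization of the measurement density $p(\ybmbar_\sigma\mid\Pbm)$; tracking these Gaussian and measure factors is what pins down the exponent. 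As a consistency check I would specialize to invertible $\Hbm$, where $\Pbm=\Ibm$, $\Wbm=\Ibm$, and $\Vbm\ybmbar_\sigma=\xbm_\sigma$, recovering Corollary~\ref{crl:invertible_noiseless} and the image-domain formula directly; the noisy case then follows the same template with the noise covariance $\sigma^2\Pbm$ replaced by its measurement-noise counterpart, yielding Theorem~\ref{thm:thm2}.
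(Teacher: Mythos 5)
Your opening moves (orthogonal invariance of the KL and of the score difference under $\xbm\mapsto\Vbm^\Tsf\xbm$, and the mask-reweighting identity $\E_{\Pbm}\big[\E[\Pbm]^{-1}\Pbm\big]=\Ibm$) agree in substance with the paper's proof, and your consistency check against Corollary~\ref{crl:invertible_noiseless} is sound. But the step you defer as ``the most delicate'' is not bookkeeping to be filled in later---it is the entire content of Theorem~\ref{thm:thm1}, and the tool you name would not close it as stated. The marginal-score identity gives, for the observed index set $S$, that the marginal score of $p$ is $\E_{p}\big[(\nabla\log p_{\xbmbar}(\xbmbar_\sigma))_S\mid(\xbmbar_\sigma)_S\big]$, and that of $q$ is the analogous conditional expectation \emph{under $q$}; so the score difference at the projected point is a difference of conditional expectations under two different measures, not $\E[\Delta_\sigma\mid(\xbmbar_\sigma)_S]$ under one measure, where $\Delta_\sigma\defn\nabla\log p_{\xbmbar}(\xbmbar_\sigma)-\nabla\log q_{\xbmbar}(\xbmbar_\sigma)$. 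Even granting a single-measure version, pushing the squared norm through a conditional expectation is a Jensen \emph{inequality}, leaving an uncontrolled conditional-variance gap. Your own accounting exposes the unresolved piece: your reweighting yields the weight $\E[\Pbm]^{-1/2}$, and you acknowledge that an extra $\E[\Pbm]^{-1}$ must appear without identifying its source.

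The paper locates that factor concretely, and in a different place than your sketch looks for it. It applies Tweedie's formula to the degenerate kernel $p(\ybmbar_\sigma\mid\ybmbar)\propto\exp\big(-\tfrac{1}{2\sigma^2}(\ybmbar_\sigma-\ybmbar)^\Tsf\Pbm(\ybmbar_\sigma-\ybmbar)\big)$ to obtain the \emph{mask-marginalized} measurement score $\nabla_{\ybmbar_\sigma}\log p(\ybmbar_\sigma)=\sigma^{-2}\big(\Pbm\E[\ybmbar\mid\ybmbar_\sigma]-\Pbm\ybmbar_\sigma\big)$ (\cref{eq:scoreofmeasuremnet}), then uses the total law of expectation over the random mask, $\E[\ybmbar\mid\ybmbar_\sigma]=\E[\Pbm]\,\E[\xbmbar\mid\xbmbar_\sigma]$ (\cref{eq:expty_exptx}). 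In the $p$--$q$ difference the $\Pbm\ybmbar_\sigma$ terms cancel, and Tweedie in the rotated coordinates gives, up to the fixed norm-preserving rotation $\Vbm$,
\begin{equation*}
\nabla\log p(\Vbm\ybmbar_\sigma)-\nabla\log q(\Vbm\ybmbar_\sigma)\;=\;\Pbm\,\E[\Pbm]\,\Delta_\sigma,
\end{equation*}
so squaring contributes $\E[\Pbm]^{2}$ and averaging the outer mask contributes one more $\E[\Pbm]$, which is exactly what forces $\Wbm^{2}=\E[\Pbm]^{-3}$, i.e.\ $\Wbm=\E[\Pbm]^{-3/2}$. In other words, the missing $\E[\Pbm]^{-1}$ arises because the score in the theorem is the score of the measurement distribution marginalized over masks---a posterior mean shrunk by $\E[\Pbm]$---not the image-domain score restricted to observed coordinates. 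Note finally that \cref{eq:expty_exptx} itself rests on $\E[\xbmbar\mid\Pbm\xbmbar_\sigma,\Pbm]=\E[\xbmbar\mid\xbmbar_\sigma]$, and the paper's equality 8 additionally needs $\Delta_\sigma$ to be independent of $\Pbm$; this is precisely the conditional-mean issue you correctly sensed, which the paper asserts inside the total-expectation computation. Any completion of your route must either import or independently justify that same step; as written, your proposal stops short of it.
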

%%%%%%%%%%%%%%%%%%%%%  Theorem %%%%%%%%%%%%%%%%%%%%%

%%%%%%%%%%%%%%%%%%%%%  Proof %%%%%%%%%%%%%%%%%%%%%%%
\begin{proof}
For $\xbm_\sigma = \xbm + \nbm$, where $\nbm \sim \Ncal(0, \sigma^2 \Ibm)$ is the diffusion process noise. Noting SVD for $\Hbm = \Ubm \Sigmabm \Vbm^T$, we define the transformed (right singular vector) coordinates as $\xbmbar_\sigma = \Vbm^\Tsf \xbm+ \Vbm^\Tsf \nbm = \xbmbar + \Vbm^\Tsf \nbm $. Since $\Vbm^\Tsf$ is an orthogonal matrix, the noise remains Gaussian with the same covariance, i.e., $\Vbm^\Tsf \nbm \sim \Ncal(0, \sigma^2 \Ibm)$. Applying Tweedie’s formula to the posterior $p(\xbmbar | \xbmbar_\sigma)$ yields
\begin{equation} \label{eq:scoretrasmittedUnitary}
    \nonumber \nabla \log p_\xbmbar(\xbmbar_\sigma) = \frac{\E[\xbmbar|\xbmbar_\sigma] -\xbmbar_\sigma}{\sigma^2} = \frac{\Vbm^\Tsf \E[\xbm|\xbm_\sigma] -\Vbm^\Tsf\xbm_\sigma}{\sigma^2} = \Vbm^\Tsf \frac{\E[\xbm|\xbm_\sigma] -\xbm_\sigma}{\sigma^2} = \Vbm^\Tsf \nabla \log p_\xbm(\xbm_\sigma), 
\end{equation}
where $p_\xbmbar(\xbmbar)$ is rotated distribution of $\xbm$. 
This relationship clarifies the connection between the score functions in the original space $\xbm_\sigma$ and the transformed space $\xbmbar_\sigma$. Specifically, we can write:
\begin{align}
\nonumber    \|\nabla \log p_\xbmbar(\xbmbar_\sigma) -\nabla \log q_\xbmbar(\xbmbar_\sigma)\|_2^2 &= \|  \Vbm^\Tsf \nabla \log p_\xbm(\xbm_\sigma) -  \Vbm^\Tsf \nabla \log q_\xbm(\xbm_\sigma)\|_2^2 \\
& = \|  \nabla \log p_\xbm(\xbm_\sigma) - \nabla \log q_\xbm(\xbm_\sigma)\|_2^2, 
\end{align}
where we use the fact that $\Vbm^\Tsf$ is an orthogonal matrix, and thus preserves Euclidean norms.

For $\xbmbar_\sigma = \xbmbar + \Vbm^\Tsf\nbm $ and $\Pbm= \Sigmabm^\dagger \Sigmabm$, we have 
\begin{equation}
    \ybmbar_\sigma = \Pbm \xbmbar_\sigma  = \Pbm \xbmbar + \nbmbar =  \ybmbar + \nbmbar, 
\end{equation}
where $\nbmbar = \Pbm \Vbm^\Tsf \nbm $. This implies that variance of $\nbmbar$ is $ \sigma^2\Pbm \Vbm^\Tsf\Vbm \Pbm = \sigma^2\Pbm^2 =\sigma^2\Pbm$  and $\nbmbar \sim \Ncal(0, \sigma^2 \Pbm)$ follows a Gaussian distribution with covariance $\sigma^2 \Pbm$. This reflects modified variance along the singular vector directions, due to the projection. We can write 
\begin{equation}
    p(\ybmbar_\sigma|\ybmbar) = \frac{1}{(2\pi)^{(r/2)}\sigma^r}\exp\left ( -\frac{1}{2\sigma^2} (\ybmbar_\sigma - \ybmbar)^\Tsf \Pbm (\ybmbar_\sigma - \ybmbar)  \right), 
\end{equation}
where $ r \defn \rank (\Pbm)$. The score function for $\ybmbar_\sigma$ can be written as 
\begin{align}\label{eq:scoreofmeasuremnet}
   \nonumber \nabla_{\ybmbar_\sigma} \log p(\ybmbar_\sigma) &= \frac{\nabla p(\ybmbar_\sigma)}{p(\ybmbar_\sigma)} = \frac{\nabla \int p(\ybmbar_\sigma|\ybmbar) p(\ybmbar)~\dd \ybmbar}{p(\ybmbar_\sigma)} = \frac{-1}{\sigma^2} \frac{\int \Pbm(\ybmbar_\sigma - \ybmbar)p(\ybmbar_\sigma|\ybmbar) p(\ybmbar)~\dd \ybmbar }{p(\ybmbar_\sigma)} \\
   & = \frac{\Pbm}{\sigma^2} \frac{ \int \ybmbar p(\ybmbar_\sigma|\ybmbar) p(\ybmbar)~\dd \ybmbar   - \ybmbar_\sigma \int p(\ybmbar_\sigma|\ybmbar) p(\ybmbar)~\dd \ybmbar}{p(\ybmbar_\sigma)} = \frac{\Pbm \E[\ybmbar|\ybmbar_\sigma] - \Pbm \ybmbar_\sigma}{\sigma^2}. 
\end{align}

Using the total law of expectation, we have 
\begin{align}\label{eq:expty_exptx}
 \nonumber \E[\ybmbar|\ybmbar_\sigma] &= \E_{\Pbm\sim p(\Pbm)} \big [ \E[\ybmbar|\ybmbar_\sigma, \Pbm]\big ] = \E_{\Pbm\sim p(\Pbm)} \big [ \E[\Pbm\xbmbar|\Pbm\xbmbar_\sigma, \Pbm]\big ] \\
 & = \E_{\Pbm\sim p(\Pbm)} \big [ \Pbm\E[\xbmbar|\xbmbar_\sigma]\big ] = \E_{\Pbm\sim p(\Pbm)} [\Pbm] \E \big [\xbmbar|\xbmbar_\sigma\big ], 
\end{align}
where in the second line, we used the fact that $\Pbm$ is fixed in the inner expectation and $\Pbm$ is independent of $\xbmbar$ and $\xbmbar_\sigma$. 

Now, to stablish the relation between $\nabla \log p_\xbm(\Vbm\ybmbar_\sigma)$ and $\nabla \log p_\xbm(\xbm_\sigma)$ using~\cref{eq:expty_exptx}, we have  
\begin{align}\label{eq:final_result}
    \nonumber & \E \big[\|\Wbm(\nabla \log p_{\xbm}(\Vbm\ybmbar_\sigma) - \nabla \log q_{\xbm}(\Vbm\ybmbar_\sigma))\|_2^2 \big]\\ 
    \nonumber&\overset{1}{=} \E \Big[ \text{Trace} \Big (  \Wbm \big(\nabla \log p_{\xbm}(\Vbm\ybmbar_\sigma) - \nabla \log q_{\xbm}(\Vbm\ybmbar_\sigma)\big) \big(\nabla \log p_{\xbm}(\Vbm\ybmbar_\sigma) - \nabla \log q_{\xbm}(\Vbm\ybmbar_\sigma) \big)^\Tsf \Wbm \Big )   \Big]\\
   \nonumber &  \overset{2}{=}  \E \Big[ \text{Trace} \Big (  \Wbm^2 \big(\nabla \log p_{\xbm}(\Vbm\ybmbar_\sigma) - \nabla \log q_{\xbm}(\Vbm\ybmbar_\sigma)\big) \big(\nabla \log p_{\xbm}(\Vbm\ybmbar_\sigma) - \nabla \log q_{\xbm}(\Vbm\ybmbar_\sigma) \big)^\Tsf \Big )   \Big]\\
   \nonumber &  \overset{3}{=}  \E \Big[ \text{Trace} \Big (  \frac{\Wbm^2}{\sigma^4} \big( \Pbm \E_{p_\xbm}[\Vbm\ybmbar|\Vbm\ybmbar_\sigma] - \Pbm \E_{q_\xbm}[\Vbm\ybmbar|\Vbm\ybmbar_\sigma] \big) \big(\Pbm \E_{p_\xbm}[\Vbm\ybmbar|\Vbm\ybmbar_\sigma] - \Pbm \E_{q_\xbm}[\Vbm\ybmbar|\Vbm\ybmbar_\sigma] \big)^\Tsf \Big )   \Big]\\
   \nonumber &  \overset{4}{=}  \E \Big[ \text{Trace} \Big (  \frac{\Pbm^\Tsf\Wbm^2 \Pbm}{\sigma^4} \Vbm\big(  \E_{p_\xbm}[\ybmbar|\ybmbar_\sigma] -  \E_{q_\xbm}[\ybmbar|\ybmbar_\sigma] \big) \big( \E_{p_\xbm}[\ybmbar|\ybmbar_\sigma] -  \E_{q_\xbm}[\ybmbar|\ybmbar_\sigma] \big)^\Tsf \Vbm^\Tsf\Big )   \Big]\\
   \nonumber &  \overset{5}{=}  \E \Big[ \text{Trace} \Big (  \frac{\Pbm\Wbm^2}{\sigma^4}\E[\Pbm]\Vbm\big(  \E_{p_\xbm}[\xbmbar|\xbmbar_\sigma] - \E_{q_\xbm}[\xbmbar|\xbmbar_\sigma] \big) \big( \E_{p_\xbm}[\xbmbar|\xbmbar_\sigma] -  \E_{q_\xbm}[\xbmbar|\xbmbar_\sigma] \big)^\Tsf \Vbm^\Tsf\E[\Pbm]\Big )   \Big]\\
   \nonumber &  \overset{6}{=}  \text{Trace} \Big ( \E \Big[   \frac{\Pbm \Wbm^2\E^2[\Pbm]}{\sigma^4} \big(   \Vbm\E_{p_\xbm}[\xbmbar|\xbmbar_\sigma] -  \Vbm\E_{q_\xbm}[\xbmbar|\xbmbar_\sigma] \big) \big( \Vbm\E_{p_\xbm}[\xbmbar|\xbmbar_\sigma] -  \Vbm\E_{q_\xbm}[\xbmbar|\xbmbar_\sigma] \big)^\Tsf    \Big] \Big )\\
   \nonumber &  \overset{7}{=}  \text{Trace} \Big ( \E \Big[ \E^{-1}[\Pbm] \Pbm \big( \nabla \log p_\xbm(\Vbm\xbmbar_\sigma) -\nabla \log q_\xbm(\Vbm\xbmbar_\sigma)   \big) \big( \nabla \log p_\xbm(\Vbm\xbmbar_\sigma) -\nabla \log q_\xbm(\Vbm\xbmbar_\sigma) \big)^\Tsf    \Big] \Big )\\
   \nonumber &  \overset{8}{=}  \text{Trace} \Big (  \E^{-1}[\Pbm]\E [\Pbm ] \E \Big[    \big( \nabla \log p_\xbmbar(\xbmbar_\sigma) -\nabla \log q_\xbmbar(\xbmbar_\sigma)   \big) \big( \nabla \log p_\xbmbar(\xbmbar_\sigma) -\nabla \log q_\xbmbar(\xbmbar_\sigma) \big)^\Tsf    \Big] \Big )\\
   \nonumber &  \overset{9}{=}  \text{Trace} \Big ( \E \Big[    \big( \nabla \log p_\xbmbar(\xbmbar_\sigma) -\nabla \log q_\xbmbar(\xbmbar_\sigma)   \big) \big( \nabla \log p_\xbmbar(\xbmbar_\sigma) -\nabla \log q_\xbmbar(\xbmbar_\sigma) \big)^\Tsf    \Big] \Big )\\
   \nonumber &  \overset{10}{=}   \E \Big[ \|\nabla \log p_\xbmbar(\xbmbar_\sigma) -\nabla \log q_\xbmbar(\xbmbar_\sigma)   \|_2^2   \Big]\\
    &  \overset{11}{=}   \E \Big[ \|\nabla \log p(\xbm_\sigma) -\nabla \log q(\xbm_\sigma)   \|_2^2   \Big].
   \end{align}

In equality 1, we use the identity $\|\mbm\|_2^2 = \text{Trace}(\mbm\mbm^\Tsf)$ for any vector $\mbm$. 

In equality 2, we apply the cyclic invariance of the trace operator: $\text{Trace}(\Xbm\Ybm\Zbm) =\text{Trace}(\Zbm\Xbm\Ybm) $.

In equality 3, we substitute the expression for the score function from~\cref{eq:scoreofmeasuremnet}.

In equality 4, we again use the cyclic property of the trace and the fact that $\Vbm$ is independent of $\ybmbar$ and can be taken out of the expectation. 

In equality 5, we use the fact that $\Pbm^2 = \Pbm$ for any projection matrix $\Pbm$, along with the result from~\cref{eq:expty_exptx}.

In equality 6, we again used cyclic property of the trace.

In equality 7, we again use $\Pbm^2 = \Pbm$, $\Wbm^2 \E[\Pbm] =\E^{-1}[\Pbm]$, and apply Tweedie’s formula.

In equality 8, we used the fact that the difference  between the two score functions $ \nabla \log p(\xbmbar_\sigma) -\nabla \log q(\xbmbar_\sigma)$, is independent of the projection matrix $\Pbm$, across all $\sigma$ values. We also used the fact that $p_{\xbmbar}(\xbmbar) = p_\xbm (\Vbm \xbmbar)$

In equality 11, we used the results of~\cref{eq:scoretrasmittedUnitary}. 
\end{proof}
%%%%%%%%%%%%%%%%%%%%%  Proof %%%%%%%%%%%%%%%%%%%%%%%

%%%%%%%%%%%%%%%%%%%%%  Remark %%%%%%%%%%%%%%%%%%%%%%%
\begin{remarksup}

    Knowing the fact that denoisers in diffusion models represent the score of noise-perturbed distribution, on can use the result of~\cref{eq:KLforimage} to obtain
\begin{align*}
    \infdiv{\px}{\qx} & = \int_0^\infty \E \Big [ \|\nabla \log p_\sigma(\xbm_\sigma) -\nabla \log q_\sigma(\xbm_\sigma)   \|_2^2  \Big]~\sigma ~ \dd\sigma \\
     & = \int_0^\infty \E \big[\|\Wbm(\nabla \log p_\sigma(\Vbm\ybmbar_\sigma) - \nabla \log q_\sigma(\Vbm\ybmbar_\sigma))\|_2^2 \big]~\sigma ~ \dd\sigma\\ 
     & =  \int_0^\infty \E \left[ \| \Wbm\left(\Dsf_\sigma\left(\Vbm\ybmbar_\sigma\right) - \Dsfhat_\sigma\left(\Vbm\ybmbar_\sigma\right)\right)\|_2^2 \right]~\sigma^{-3} ~ \dd\sigma.
\end{align*}
Here, Tweedie's formula $\sigma^2 \nabla \log p_\sigma(\Vbm\ybmbar_\sigma) = \Vbm\ybmbar_\sigma - \Dsf_\sigma(\Vbm\ybmbar_\sigma)$ and $\sigma^2 \nabla \log q_\sigma(\Vbm\ybmbar_\sigma) =\Vbm \ybmbar_\sigma - \Dsfhat_\sigma(\Vbm\ybmbar_\sigma)$ was used to acquire the last equality.
\end{remarksup}

%%%%%%%%%%%%%%%%%%%%%  Remark %%%%%%%%%%%%%%%%%%%%%%%

For the case of \textbf{invertible} measurement operator, we have the following corollary. Note that here, we don't need a set of measurement operators $\Hbm$ and only one measurement operator is sufficient to derive the results. Moreover, we don't require the SVD of $\Hbm$ to obtain the resutls. 
%%%%%%%%%%%%%%%%%%%%%  corollary %%%%%%%%%%%%%%%%%%%%%%%
\begin{corollarysup}\label{crl:invertible_noiseless}
Let $\ybm_\sigma = \Hbm \xbm + \nbm$ denote the noisy measurements at noise level $\sigma$ and $\xbm \sim p(\xbm)$. Then, if the measurement operator $\Hbm$ is invertible, the KL divergence between the InD density $p(\xbm)$ and the OOD density $q(\xbm)$ can be expressed as
\begin{equation*}
\infdiv{\px}{\qx} = \int_0^\infty \E  \left[ \|\nabla \log p_\sigma\left(\ybm_\sigma\right) - \nabla \log q_\sigma\left(\ybm_\sigma\right)\|_2^2 \right] \sigma \dd\sigma,
\end{equation*}
where  the expectation is taken over  $\xbm \sim p(\xbm)$, and $\ybm \sim p(\ybm|\xbm)$. 
\end{corollarysup}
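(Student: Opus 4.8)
The plan is to exploit the fact that the Kullback--Leibler divergence is invariant under invertible reparametrizations, which collapses the invertible case to a direct application of the image-domain identity in~\cref{eq:KLforimage}. Concretely, since $\Hbm$ is invertible the map $\xbm \mapsto \ybm = \Hbm\xbm$ is a diffeomorphism, so the pushforward densities $p_{\ybm}(\ybm) = |\det\Hbm|^{-1}p(\Hbm^{-1}\ybm)$ and $q_{\ybm}(\ybm) = |\det\Hbm|^{-1}q(\Hbm^{-1}\ybm)$ satisfy $\infdiv{\px}{\qx} = \infdiv{p_{\ybm}(\ybm)}{q_{\ybm}(\ybm)}$. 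This is the standard change-of-variables argument: substituting $\ybm = \Hbm\xbm$ in the integral defining the right-hand side, the constant Jacobian $|\det\Hbm|$ appears identically in numerator and denominator inside the logarithm and therefore cancels, leaving exactly the image-domain divergence.

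First I would establish this invariance explicitly, taking care that it is the same two-argument divergence that appears on both sides. Next, I would treat $\ybm$ itself as the clean signal of an auxiliary diffusion process: adding diffusion noise $\nbm \sim \Ncal(0, \sigma^2\Ibm)$ gives $\ybm_\sigma = \ybm + \nbm$, whose marginals are precisely the Gaussian-smoothed densities $p_\sigma(\ybm_\sigma)$ and $q_\sigma(\ybm_\sigma)$. Applying~\cref{eq:KLforimage} verbatim with $\ybm$ in the role of $\xbm$ then yields
\begin{equation*}
\infdiv{p_{\ybm}(\ybm)}{q_{\ybm}(\ybm)} = \int_0^\infty \E\big[\|\nabla\log p_\sigma(\ybm_\sigma) - \nabla\log q_\sigma(\ybm_\sigma)\|_2^2\big]\,\sigma\,\dd\sigma,
\end{equation*}
with the expectation over $\ybm \sim p_{\ybm}$ and the diffusion noise, which is exactly the expectation over $\xbm \sim \px$ and $\ybm \sim p(\ybm|\xbm)$ stated in the corollary. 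Chaining the two equalities closes the argument, and notably neither the SVD of $\Hbm$ nor a family of operators $\Hbm \sim p(\Hbm)$ is invoked anywhere.

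The main obstacle is ensuring that the regularity hypotheses underlying~\cref{eq:KLforimage}---the rapid decay of the densities and their gradients at infinity, which is what makes the integration-by-parts boundary terms in Appendix~\ref{app:proofKLDiv} vanish, together with the terminal condition $p_\sigma \approx q_\sigma \approx \Ncal(0,\Ibm)$ as $\sigma\to\infty$---are inherited by the pushforwards $p_{\ybm}$ and $q_{\ybm}$. Because $\Hbm$ is a fixed invertible linear map, $\Hbm^{-1}$ is Lipschitz and the decay of $p$ and $q$ transfers to $p_{\ybm}$ and $q_{\ybm}$ up to the constant factor $|\det\Hbm|^{-1}$, so these conditions hold automatically; I would state this transfer explicitly rather than leave it implicit. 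As a consistency check I would also note that the corollary agrees with Theorem~\ref{thm:thm1}: when $\Hbm$ is invertible one has $\Sigmabm^\dagger\Sigmabm = \Pbm = \Ibm$, hence $\E[\Pbm] = \Ibm$ and $\Wbm = \Ibm$, so the weighting matrix and the averaging over operators both become trivial, matching the claim that a single invertible operator suffices.
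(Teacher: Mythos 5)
Your proposal is correct and follows essentially the same route as the paper's own proof: establish invariance of the KL divergence under the invertible change of variables $\ybm = \Hbm\xbm$ (with the constant Jacobian $|\det\Hbm|$ cancelling inside the logarithm), then apply the image-domain identity of~\cref{eq:KLforimage} with $\ybm$ playing the role of the clean signal, and chain the two equalities. Your explicit verification that the decay conditions needed for the integration-by-parts in Appendix~\ref{app:proofKLDiv} transfer to the pushforward densities, and the consistency check that $\Pbm = \Wbm = \Ibm$ recovers Theorem~\ref{thm:thm1}, are welcome additions of rigor but do not change the argument.
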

%%%%%%%%%%%%%%%%%%%%%  corollary %%%%%%%%%%%%%%%%%%%%%%%

%%%%%%%%%%%%%%%%%%%%%  proof %%%%%%%%%%%%%%%%%%%%%%%
\begin{proof}
From~\cref{eq:KLforimage}, similar results can be obtained for measurements $\ybm$ as 
    \begin{equation} \label{eq:DKLofmeasurement}
    \infdiv{\py}{\qy} = \int^\infty_0 \E \left [\|\nabla_{\ybm_\sigma}\log p_\sigma(\ybm_\sigma)- \nabla_{\ybm_\sigma} \log q_\sigma(\ybm_\sigma)\|_2^2  \right]~\sigma~ \dd\sigma.
    \end{equation}
   For an invertible matrix $\Hbm$, we have 
    $\py = \px .|\det(\Hbm)|^{-1}$, which yields 
    \begin{align}\label{eq:DKLxtoy}
        \nonumber \infdiv{\py}{\qy} &= \int \py \log \frac{\py}{\qy} \dd \ybm = \int \px |\det(\Hbm)|^{-1} \log \left  (\frac{\px |\det(\Hbm)|^{-1} }{\qx |\det(\Hbm)|^{-1}} \right ) \dd \ybm \\
        &= \int \px  \log \left  (\frac{\px  }{\qx} \right ) \dx = \infdiv{\px}{\qx}. 
    \end{align}
    Combining~\cref{eq:DKLofmeasurement} and~\cref{eq:DKLxtoy} establishes the desired result.
\end{proof}

\section{KL Divergence for Noisy Measurements}\label{app:proofthm2}
Here we restate  the results of Theorem~\ref{thm:thm1} for the case where measurements are corrupted by noise. In this setting, we additionally assume that the measurement noise level $\sigma_\zbm$ is known. Consider a noisy measurement $\ybm$  acquired using an imaging system according to $\ybm = \Hbm \xbm + \zbm$, where $\xbm \sim \px$ and $\zbm \sim \Ncal(0, \sigma_\zbm^2 \Ibm)$. Using SVD of $\Hbm$, we have $\ybmbar = \Pbm \xbmbar + \zbmbar $, where $\zbmbar = \Sigmabm^\dagger \Ubm^T \zbm $ as in~\cref{eq:svdmeasurement}. For every noise level $\sigma$ in noise schedule vector $\sigmabm$, we create noisy version of SVD observations as 
\begin{equation}\label{eq:ybarsigmanoisy}
    \ybmbar_\sigma = \Pbm \xbmbar_\sigma = \Pbm \xbmbar + \nbmbar +\zbmbar = \ybmbar +\nbmbar, \qquad \text{where} \qquad \nbmbar = \Pbm \nbm \sim \Ncal(0, \sigma^2 \Pbm). 
\end{equation}

\begin{theorem}\label{thm:thm2}
Let $\ybmbar_\sigma$ be obtained using~\cref{eq:ybarsigmanoisy}, then the KL divergence between density function $\px$ and $\qx$ is obtained as 
\begin{align*}
    \infdiv{\px}{\qx} =  \int_0^\infty \E \left[ \| \Wbm \left(\nabla \log p_\sigma \left(\Vbm\ybmbar_\sigma\right) - \nabla \log  q_\sigma\left(\Vbm\ybmbar_\sigma\right)\right)\|_2^2 \right]~\sigma ~ \dd\sigma, 
\end{align*}
where $\Wbm = \E[\Pbm]^{-\frac{3}{2}}$ is a diagonal weight matrix, $\ybmbar = \Sigmabm^\dagger \Ubm^\Tsf \ybm$, and expectation is taken over $\Pbm$, $\xbm \sim p(\xbm)$, and $\ybmbar \sim p(\ybmbar|\xbm, \Pbm)$. 
\end{theorem}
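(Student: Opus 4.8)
The plan is to follow the proof of Theorem~\ref{thm:thm1} line by line and to isolate the single place where the measurement noise $\zbmbar$ alters the algebra. As before, I would pass to the right-singular coordinates, use $\xbmbar_\sigma = \xbmbar + \Vbm^\Tsf\nbm$ together with orthogonality of $\Vbm$ to reduce the image-domain score gap to its rotated counterpart, and then express the measurement-domain score through the same Tweedie identity as in the noiseless proof. The crucial observation is that the conditional law $p(\ybmbar_\sigma \mid \ybmbar) = \Ncal(\ybmbar, \sigma^2\Pbm)$ is unchanged by the presence of $\zbmbar$, because the diffusion increment $\nbmbar$ still has covariance $\sigma^2\Pbm$ in \cref{eq:ybarsigmanoisy}. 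Hence the identity $\nabla\log p(\ybmbar_\sigma) = \sigma^{-2}\Pbm(\E[\ybmbar \mid \ybmbar_\sigma] - \ybmbar_\sigma)$ carries over verbatim, and only the conditional mean $\E[\ybmbar \mid \ybmbar_\sigma]$ must be recomputed.

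The genuinely new ingredient is that, with $\ybmbar = \Pbm\xbmbar + \zbmbar$, the measurement noise $\zbmbar$ is now correlated with the conditioning variable $\ybmbar_\sigma = \Pbm\xbmbar + \zbmbar + \nbmbar$. My plan is to treat $\zbmbar$ and $\nbmbar$ as jointly Gaussian on the range of $\Pbm$ and to exploit that $\sigma_\zbm$ is known. Setting $\wbm \defn \zbmbar + \nbmbar \sim \Ncal(0, (\sigma^2 + \sigma_\zbm^2)\Pbm)$, the observation $\ybmbar_\sigma = \Pbm\xbmbar + \wbm$ is distributionally identical to a noiseless projected measurement at the effective level $s \defn \sqrt{\sigma^2 + \sigma_\zbm^2}$. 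A short linear-Gaussian computation then yields $\E[\zbmbar \mid \ybmbar_\sigma] = -\sigma_\zbm^2\,\nabla\log p(\ybmbar_\sigma)$, so that $\E[\ybmbar \mid \ybmbar_\sigma] - \ybmbar_\sigma = \sigma^2\,\nabla\log p(\ybmbar_\sigma)$: the $\sigma_\zbm^2$ term arising from the clean-signal estimate and the one arising from $\E[\zbmbar \mid \ybmbar_\sigma]$ cancel exactly. This cancellation is precisely where knowing $\sigma_\zbm$ is used, and it is what keeps the $\sigma^{-2}$ prefactor in the measurement score consistent with the noiseless case.

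With the conditional mean in hand, I would substitute it into the effective-level analogue of the identity $\E[\ybmbar \mid \ybmbar_\sigma] = \E[\Pbm]\,\E[\xbmbar \mid \xbmbar_s]$ (which holds because $\ybmbar_\sigma \stackrel{d}{=} \Pbm\xbmbar_s$ and $\Pbm$ is independent of $\xbmbar$), and then run the trace manipulation of equality~1 through equality~11 in the proof of Theorem~\ref{thm:thm1} unchanged. Using $\Pbm^2 = \Pbm$, the cyclic invariance of the trace, and $\Wbm^2\E[\Pbm] = \E^{-1}[\Pbm]$, the $\Pbm$- and $\Vbm$-dependent factors collapse exactly as before, and the weighted measurement-domain score gap is identified with the image-domain score gap. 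Here \cref{as:one} supplies invertibility of $\E[\Pbm]$ on the relevant subspace so that $\Wbm$ is well defined, and \cref{as:two} guarantees the shared $\Vbm$ needed to compare scores on a common basis.

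I expect the reconciliation of the effective level $s = \sqrt{\sigma^2 + \sigma_\zbm^2}$ with the nominal integration variable $\sigma$ to be the main obstacle. The measurement one actually evaluates carries total noise $s$, whereas the target in \cref{eq:KLforimage} integrates the image-domain score gap against $\sigma\,\dd\sigma$. I would argue that the exact cancellation of the $\sigma_\zbm^2$ terms restores the factor $\sigma^{-2}$ and keeps the integrand in step with the noiseless identity, so that the argument closes back onto $\infdiv{\px}{\qx}$; the delicate point to verify is that this bookkeeping genuinely recovers the full KL rather than only its contribution above the measurement-noise floor $\sigma_\zbm$, and that the change of variables $s\,\dd s = \sigma\,\dd\sigma$ does not silently discard the band $[0, \sigma_\zbm]$. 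Confirming that this step is consistent---and does not quietly require $\sigma_\zbm \to 0$---is what I would scrutinize most carefully.
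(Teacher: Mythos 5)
Your architecture is the same as the paper's: reuse the Theorem~\ref{thm:thm1} chain and recompute only the conditional mean $\E[\ybmbar\mid\ybmbar_\sigma]$. At that one step, however, you are actually more careful than the paper. The paper's proof asserts $\E[\zbmbar\mid\ybmbar_\sigma,\Pbm]=0$ and then replaces $\E[\xbmbar\mid\ybmbar_\sigma]$ by $\E[\xbmbar\mid\xbmbar_\sigma]$; both statements are exact only when $\sigma_\zbm=0$, precisely because of the correlation you point out between $\zbmbar$ and the conditioning variable $\ybmbar_\sigma=\Pbm\xbmbar+\zbmbar+\nbmbar$. Your linear--Gaussian computation $\E[\zbmbar\mid\ybmbar_\sigma]=-\sigma_\zbm^2\,\nabla\log p(\ybmbar_\sigma)$ and the resulting cancellation $\E[\ybmbar\mid\ybmbar_\sigma]-\ybmbar_\sigma=\sigma^2\,\nabla\log p(\ybmbar_\sigma)$ are correct on the range of $\Pbm$ (for operators like inpainting where $\zbmbar$ has covariance $\sigma_\zbm^2\Pbm$; for general singular values the covariance $\sigma_\zbm^2\Sigmabm^\dagger{\Sigmabm^\dagger}^\Tsf$ is anisotropic and the effective level becomes componentwise), and they do restore the $\sigma^{-2}$ prefactor exactly.

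The step you flag for scrutiny is a genuine gap, and it does not close the way you hope. Since $\ybmbar_\sigma$ is equal in distribution to $\Pbm\xbmbar_s$ with $s=\sqrt{\sigma^2+\sigma_\zbm^2}$, running the trace manipulations of Theorem~\ref{thm:thm1} exactly forces the signal-side Tweedie step to be taken at level $s$, so the weighted measurement-domain gap at nominal level $\sigma$ is identified with the \emph{image-domain score gap at level $s$}, not at level $\sigma$. Integrating against $\sigma\,\dd\sigma$ and substituting $s\,\dd s=\sigma\,\dd\sigma$ then yields
\begin{equation*}
\int_{\sigma_\zbm}^{\infty}\E\big[\|\nabla\log p_s(\xbm_s)-\nabla\log q_s(\xbm_s)\|_2^2\big]\,s\,\dd s,
\end{equation*}
i.e., the KL divergence \emph{minus} the contribution of the band $[0,\sigma_\zbm]$ --- exactly the loss you suspected the bookkeeping might silently incur. (A related mismatch: the theorem's integrand uses the level-$\sigma$ scores $\nabla\log p_\sigma$ evaluated at $\Vbm\ybmbar_\sigma$, while the exact score of the law of $\ybmbar_\sigma$ corresponds to level $s$; identifying the two is again exact only when $\sigma_\zbm=0$.) So your exact derivation proves the noise-floor-truncated statement, with equality in Theorem~\ref{thm:thm2} recovered only in the limit $\sigma_\zbm\to 0$; this is consistent with the small but systematic drift between the noisy-measurement estimates and the image-domain KL in Table~\ref{tab:kl_ablation}. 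The paper's own proof reaches the unqualified equality only through the two inexact assertions noted above, so your more careful computation has, in effect, located the place where that proof overstates its conclusion rather than a defect unique to your approach.
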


\begin{proof}

Similar to noiseless measurement case, we have $\xbm_\sigma = \xbm +\nbm$, $\nbm \sim \Ncal(0, \sigma^2 \Ibm)$, and  $\xbmbar_\sigma = \Vbm^\Tsf \xbm+ \Vbm^\Tsf \nbm = \xbmbar + \Vbm^\Tsf \nbm $. Using the results of~\cref{eq:scoretrasmittedUnitary}, we have 
\begin{equation}
\nonumber    \|\nabla \log p(\xbmbar_\sigma) -\nabla \log q(\xbmbar_\sigma)\|_2^2 = \|  \nabla \log p(\xbm_\sigma) - \nabla \log q(\xbm_\sigma)\|_2^2. 
\end{equation}
Applying the SVD of $\Hbm$ for  noisy measurement $\ybm$ yields
\begin{equation}
    \ybmbar = \Pbm \xbmbar + \zbmbar \qquad \text{where} \qquad \zbmbar \sim \Ncal \left( 0, \sigma_\zbm^2 \Sigmabm^\dagger {\Sigmabm^\dagger}^\Tsf\right), 
\end{equation}
By adding the noise according to diffusion model schedule to $\ybmbar$, we have 
\begin{equation}
\ybmbar_\sigma = \ybmbar + \nbmbar = \Pbm \xbmbar + \zbmbar +\nbmbar \qquad \text{where} \qquad \nbmbar \sim \Ncal \left( 0, \sigma^2 \Pbm\right). 
\end{equation}
The relation between $\ybmbar$ and $\ybmbar_\sigma$ is the same for both noisy  and noiseless measurements. Thus, by using the result from~\cref{eq:scoreofmeasuremnet} we have
\begin{equation}
   \nonumber \nabla_{\ybmbar_\sigma} \log p(\ybmbar_\sigma)  = \frac{\Pbm \E[\ybmbar|\ybmbar_\sigma] - \Pbm \ybmbar_\sigma}{\sigma^2}.
\end{equation}
To simplify $\E[\ybmbar|\ybmbar_\sigma]$, we have 
\begin{align*}
 \E[\ybmbar|\ybmbar_\sigma] &= \E_{\Pbm\sim p(\Pbm)} \big [ \E[\ybmbar|\ybmbar_\sigma, \Pbm]\big ] = \E_{\Pbm\sim p(\Pbm)} \big [ \E[\Pbm\xbmbar + \zbmbar|\ybmbar_\sigma, \Pbm]\big ] \\
 & = \E_{\Pbm\sim p(\Pbm)} \big [ \E[\Pbm\xbmbar |\ybmbar_\sigma, \Pbm] + \E[\zbmbar|\ybmbar_\sigma, \Pbm]\big ]\\
& =  \E_{\Pbm\sim p(\Pbm)} \big [ \E[\Pbm\xbmbar |\ybmbar_\sigma, \Pbm] \big ] = \E_{\Pbm\sim p(\Pbm)} \big [ \Pbm\E[\xbmbar |\ybmbar_\sigma] \big ], 
\end{align*}
where we used total law of expectation and the fact that $\E[\zbmbar|\ybmbar_\sigma, \Pbm]=0$. Since $\Pbm$ is given in the inner conditional expectation, we can factor it out of the expectation in the last line. Knowing that given $\xbmbar_\sigma$, there is no additional information about $\xbmbar$ contained in $\ybmbar_\sigma$, we have 
\begin{equation}
 \E[\ybmbar|\ybmbar_\sigma] = \E_{\Pbm\sim p(\Pbm)} \big [ \Pbm\E[\xbmbar |\ybmbar_\sigma] \big ] = \E_{\Pbm\sim p(\Pbm)} \big [ \Pbm\E[\xbmbar |\xbmbar_\sigma] \big ] = \E_{\Pbm\sim p(\Pbm)} [\Pbm] \E \big [\xbmbar|\xbmbar_\sigma\big ], 
\end{equation}
where we used the fact that $\Pbm$ is independent of $\xbmbar$ and $\xbmbar_\sigma$. Following the same logic as in~\cref{eq:final_result}, we have 
\begin{equation}
    \E \big[\|\Wbm(\nabla \log p_\sigma(\Vbm\ybmbar_\sigma) - \nabla \log q_\sigma(\Vbm\ybmbar_\sigma))\|_2^2 \big] =  \E \Big[ \|\nabla \log p(\xbmbar_\sigma) -\nabla \log q(\xbmbar_\sigma)   \|_2^2   \Big]. 
\end{equation}
The result can be obtained using the same logic from the proof of Theorem~\ref{thm:thm1} from Appendix~\ref{app:proofthm1}. 
\end{proof}

\section{Related Works}
Distribution shift between training and test data distributions is a fundamental challenge in machine learning, with direct implications for model reliability and robustness~\cite{rohan2020measuring, malinin2021shifts, zhang2022whyfail, kulinski2023towards}. Accurately measuring distribution shift is essential for understanding when models will generalize poorly, and OOD detection techniques often aim to signal such shifts by evaluating feature-based, likelihood-based, or  reconstruction-based OOD metrics~\cite{Peng2022OODdetection, fang2022learnableOODdetection, fort2021explore}. However, many existing methods rely on full access to clean samples, limiting their applicability in corrupted or measurement-limited settings~\cite{heng2024out, leBellier2024detectingood, Graham2023denoisingdiff, Gao2023semantic, liu2023unsupervised, livernoche2024on}.

To mitigate the impact of distribution shift, adaptation strategies have been developed that modify models post-training to better align with the test distribution~\cite{farahani2021brief}. In the context of diffusion models, such strategies typically focus on adjusting the generative process, modifying score functions, or fine-tuning to improve robustness against domain shifts~\cite{Kang2019contrastive, ganin2015unsupervised, You2019universal, sun2016return, shai2006analysis}. While these methods can reduce performance degradation, they often assume access to clean adaptation samples or reconstruction proxies.

In imaging inverse problems, the challenges of distribution shift, OOD detection, and adaptation are amplified by the absence of clean images at test time~\cite{gilton2021adaptation, Yismaw2024domain, shoushtari2024prior, chung2024deep, chung2023solving}. Conventional approaches to quantifying shift and adapting models are not directly applicable, as only corrupted measurements are available. This motivates the need for measurement-domain metrics and adaptation techniques that operate without requiring ground-truth reconstructions—precisely the setting we address in this work.

\section{Implementation Details}
\subsection{Inpainting}\label{ss:inpainting}
\textbf{Dataset.} We use the Flickr-Faces-HQ (FFHQ) dataset~\cite{karras2019style} as our InD data. For OOD data, we include images from the AFHQ~\cite{choi2020starganv2}, MetFaces~\cite{karras2020training}, and Microscopy (CHAMMI)~\cite{ChenCHAMMI2023} datasets. All images were resized to $64 \times 64 $for training and evaluation.

Test samples are randomly drawn from the FFHQ test set (the last $10,000$ images). For KL divergence experiments (Figure~\ref{fig:metric}), we select $20$  images (included in the supplementary materials) and process them using the inpainting measurement model. The same test set is also used for image reconstruction with the DPS algorithm.

For adaptation experiments, we sample random images from the FFHQ training set. When required by the diffusion models, data is normalized to the $[-1, 1]$ range.

\textbf{Model checkpoints.}
InD diffusion model for FFHQ and OOD AFHQ were taken from~\cite{Karras2022edm} (DDPM++ using EDM preconditioning). Similar training strategy was used  for microscopy and MetFaces diffusion models one NVIDIA A100 GPUs. All experiments regarding KL divergence were obtained using one NVIDIA RTX A6000 GPU. 

\textbf{Measurement model.} We followed the inpainting corruption setup from~\cite{kawargsure}, where the degradation operator $\Hbm$ randomly masks non-overlapping $4 \times 4$ patches across each image with probability p, independently per sample. Each $\Hbm$ is a sample-specific binary diagonal matrix that acts element-wise. As a diagonal matrix, $\Hbm$ is symmetric, idempotent, and admits the singular value decomposition $\Hbm = \Ibm \Sigmabm \Ibm^\top$, where $\Sigmabm = \Hbm$ has entries in \{0,1\}. This implies that the projection matrix $\Pbm = \Hbm^\top \Hbm = \Hbm$, and all measurement operators share the same right-singular vectors $\Vbm = \Ibm$, satisfying Assumption~\ref{as:two}. Moreover, the stochastic nature of the masking ensures that all pixels are eventually observed across different \(\Hbm\), and the union of their row spaces spans $\R^n$, satisfying Assumption~\ref{as:one}.

\subsection{FastMRI}\label{ss:mri}
\textbf{Datasets.}
We use brain MRI images from the fastMRI dataset~\cite{knoll2020fastmri, zbontar2019fastmri} as the InD data. All images are center-cropped to a resolution of $320 \times 320$ for training. The training set consists of 48,406 slices, where only slices with index greater than 5 are included. For OOD data, we extract 29,877 slices from single-coil knee MRI scans and 7,673 slices from prostate MRI scans. For evaluation, 20 images from the brain MRI validation set are used as the test set.

\textbf{Model checkpoints.}
Diffusion models for all three datasets were trained using~\cite{Karras2022edm} (DDPM++ using EDM preconditioning) using one NVIDIA A100 GPUs. All experiments regarding KL divergence were obtained using one NVIDIA RTX A6000 GPU. 

\textbf{Measurement model.} We followed the MRI measurement setup from~\cite{jalal2021robust, kawargsure} to create the corrupted data. The measurement operator $\Hbm$ performs partial Fourier sampling along the frequency (readout) axis, with an acceleration factor $R$. Specifically, $\Hbm$ retains the lowest $120/R$ frequency components and randomly selects an additional $200/R$ frequencies from the remaining spectrum, yielding a total of $320/R$ retained lines out of $320$. The operator can be expressed as $\Hbm = \Ibm \Sigmabm \Fbm$, where $\Fbm$ denotes the discrete Fourier transform and $\Sigmabm$ is a diagonal binary matrix encoding the sampling pattern. This representation serves as a valid SVD of $\Hbm$ and can be efficiently implemented via FFT. The structure of $\Hbm$ satisfies our theoretical assumptions: it is known at inference time, its right-singular vectors are $\Fbm$ and  shared across all samples  (satisfying Assumption~\ref{as:two}),and the combination of fixed low-frequency sampling with randomized high-frequency selection ensures that the union of observed frequency components across samples covers the full signal space(satisfying Assumption~\ref{as:one}).
\subsection{KL divergence experiments on GMMs}
To visualize and validate KL divergence estimation between distributions under varying noise levels and partial observations, we designed a synthetic setup using Gaussian mixture models (GMMs) in a $10$-dimensional space. Both the InD and OOD  were defined as GMMs with $K = 3$ components, each having equal weights and isotropic Gaussian covariances. The component means of the InD distribution were arranged to form a structured triangular configuration in the first two principal dimensions of $\R^{10}$: component means were placed along the x-axis with offsets of $5$ units, while alternating vertically in the y-direction to create separation. Specifically, the InD means were defined as $[0, 0, 0, \ldots], [5, 5, 0, \ldots], [10, 0, 0, \ldots]$, with the remaining eight dimensions set to zero. All InD components shared identical covariance matrices, set to the $10 \times 10$ identity matrix, yielding isotropic spreads in all directions.

To construct the OOD distribution, each InD component mean was shifted in the first two dimensions: $10$ units along the x-axis and $-5$ units along the y-axis. This resulted in OOD component centers that were clearly displaced from their InD counterparts: $[10, -5, 0, \ldots], [15, 0, 0, \ldots], [20, -5, 0, \ldots]$. Covariance matrices for the OOD components were again isotropic and identical to the InD case. This setup ensures that the only difference between InD and OOD distributions lies in their location, allowing for a clean assessment of distributional shift without confounding factors such as varying shape or spread.

We approximated the KL divergence metrics both in data and measurement domain using the corresponding formulas using a Riemann sum over $\sigma \in [0.01, 1.0]$. In a measurement-corrupted scenario, we applied random masking to the data with a given probability, zeroing out entries to simulate missing observations (e.g., similar to inpainting). We then computed the same score-based KL on the masked data and compared it to the full-data KL. Visualizations in Figure~\ref{fig:toy} include  PCA projections of the InD and OOD samples in $2D$, showing clear spatial separation in the first two dimensions. Our results show that the KL divergence computed from partially observed (masked) data closely tracks the divergence computed from clean data for various inpainting probablity $p$ and number of samples used for KL metric computing $N$. 

\subsection{Adaptation}\label{ss:adaptation}
Theorem~\ref{thm:thm1} provides the following metric for measuring distribution shift in term of KL divergence computed only on measurement data 
\begin{align*}
    \infdiv{\px}{\qx} =  \int_0^\infty \E \big[\|\Wbm(\nabla \log p_\sigma(\Vbm\ybmbar_\sigma) - \nabla \log q_\sigma(\Vbm\ybmbar_\sigma))\|_2^2 \big]\sigma~ \dd\sigma. 
\end{align*}
We update the OOD model by minimizing the following loss function on the limited corrupted projected measurements from the InD distribution 
\begin{equation}
\label{eq:lossadapt}
\text{MSE}(\Dsf_{\text{Adapted}}) = \E_{\ybm, \ybm_\sigma} \left [ \|\Wbm(\Vbm\ybmbar - \Dsf_\text{Adapted}(\Vbm\ybmbar_\sigma))\|_2^2\right].
\end{equation}

The training follows the training for diffusion models  from~\cite{Karras2022edm} (DDPM++ using EDM preconditioning), without changing the parameters (only batch number was adjusted based on the number of corrupted measurements used). For each batch, same inpainting/MRI mask was used.  Adaptation was done using one NVIDIA RTX A6000 GPU.  Data-prepartion for the adaptation follows the same procedure for calculating the KL divergence, noted in sections~\ref{ss:inpainting} and~\ref{ss:mri}. 
Adaptation is terminated when (i) the training loss fails to improve by
$\ge 0.5\%$ over the last $10\,\text{kimg}$, or (ii) $B$ kimg have been
processed ($B=500$ for 64 measurements, $B=1000$ for 128).

\subsection{Additional Experiments}\label{ss:addexp}
\subsubsection{MRI.}
Figure~\ref{fig:sup_metric_mri} extends our evaluation to the MRI measurements using different subsampling masks, comparing the  KL divergence—computed from clean brain MRI slices—with our measurement-domain KL metric, using only undersampled k-space measurements. The InD model is trained on brain MRI data, while the OOD models are trained on knee and prostate scans from the fastMRI dataset. Results are shown for acceleration rates $R \in \{4, 6, 8\}$, with the vertical axis representing the truncated KL divergence integrated up to diffusion noise level $\sigma$. As shown, the proposed metric closely follows the  KL divergence across all settings, demonstrating its robustness even under aggressive subsampling. Example slices from each dataset are shown on the right.

Figure~\ref{fig:sup_adapted_mri} demonstrates the effect of model adaptation on reducing distribution shift in the MRI setting. We plot the KL divergence between Brain and Prostate MRI slices, both before and after adapting the OOD model using only 64 projected (corrupted) measurements. Results are shown for an acceleration rate of R = 4, with KL evaluated in both the image domain (dashed) and measurement domain (solid). As shown, adaptation using only projected measurements substantially reduces the KL divergence, confirming the effectiveness of our adaptation strategy in bridging the distributional gap without requiring clean images.

\begin{figure}[t]
    \centering
    \includegraphics[width=1\textwidth]{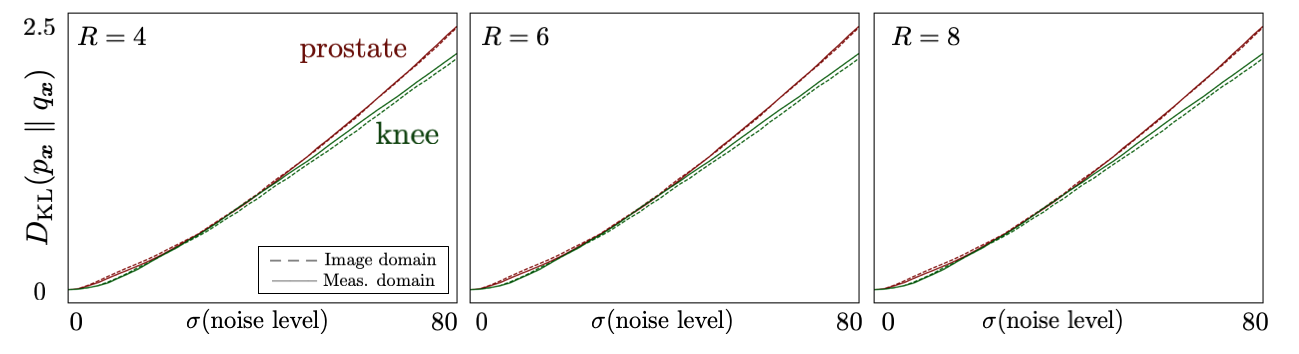}
    \caption{\small\textit{Comparison of the  distribution shift (dashed lines), computed using clean images, and our proposed measurement-domain KL metric (solid lines) between an InD model trained on Brain and OOD models trained on Knee and Prostate MRI slices from fastMRI dataset. Results are shown under MRI acceleration rates $R \in \{4,6,8\}$. The vertical axis shows $\KL$, evaluated as the integrand in~\cref{eq:mainmetric} and~\cref{eq:KLforimage} up to diffusion noise level $\sigma$. The proposed metric accurately tracks the  KL divergence, even under high-levels of corruption. Right: Samples from InD and OOD datasets.}}
    \label{fig:sup_metric_mri}
\end{figure}

\begin{figure}[t]
    \centering
    \includegraphics[width=0.45\textwidth]{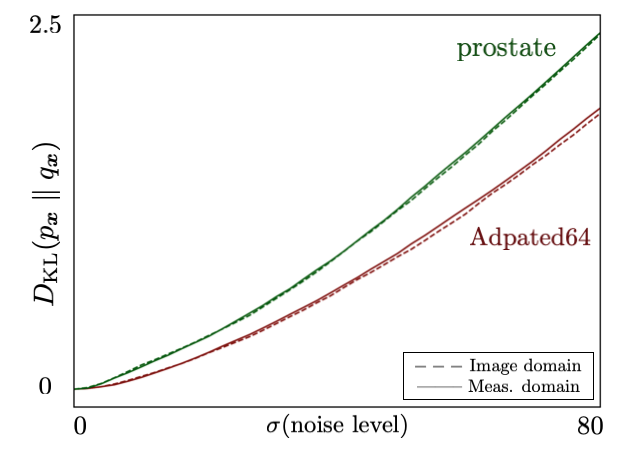}
    \caption{\small\textit{$\KL$ between Brain MRI and Prostate MRI, as well as adapted models using 64 projected measurements, measured in the image domain (dashed) and the measurement domain (solid) for subsampled MRI with acceleration rate $R=4$. Notably, adapting the network using only projected measurements significantly reduces the distributional gap.  }}
    \label{fig:sup_adapted_mri}
\end{figure}

Table~\ref{tab:sup_ab_mri} reports the KL divergence between Brain (InD) and Prostate (OOD) MRI distributions under varying acceleration rates $R \in \{4, 6, 8\}$ and measurement noise levels $\sigma_{\zbm} \in \{0.0, 0.1, 0.2\}$.  The most right column shows the  KL divergence computed in the image domain. Across all settings, the measurement-domain KL estimates remain stable and closely match the image-domain value, demonstrating the robustness of our metric to both subsampling and high levels of measurement noise.

\begin{table}[t]
\centering
\caption{KL divergence between Brain (InD) and Prostate (OOD) as a function of MRI acceleration rate $R$ and measurement noise level $\sigma_{\zbm}$. Note the robustness of the metric to measurement noise.}
\renewcommand{\arraystretch}{1.1}
\setlength{\tabcolsep}{4pt}
\begin{tabular}{!{\vrule width 0.6pt}c!{\vrule width 0.6pt}ccc!{\vrule width 0.1pt}c!{\vrule width 0.6pt}}
\hline
\diagbox[width=4em]{\textbf{$R$}}{\textbf{$\sigma_{\zbm}$}} 
& $\mathbf{0.0}$ & $\mathbf{0.1}$ & $\mathbf{0.2}$  &  $\KL$ (Img) \\
\hline
$\mathbf{4}$  & $2.51875$ & $2.53045$ & $2.53055$& $2.50662$  \\
$\mathbf{6}$   & $2.51844$ & $2.53003$ & $2.53027$ & $2.50662$ \\
$\mathbf{8}$   & $2.51821$ & $2.52980$ & $2.53002$ & $2.50662$ \\
\hline
\end{tabular}
\vspace{1em}

\label{tab:sup_ab_mri}
\end{table}

% %%%%%%%%%%%%%%%%%%%%%  Table  %%%%%%%%%%%%%%%%%%%%%
\begin{table}[t]
\renewcommand{\arraystretch}{1.1}  % Increase row spacing
\centering
\caption{Comparison of InD, OOD, and Adapted models for image reconstruction using DPS, for single-coil MRI reconstruction with for acceleration ratio $R=4$ and different measurement noise. }
\begin{tabular}{l|cc|cc}
\toprule
\multirow{2}{*}{\textbf{Method}} 
& \multicolumn{2}{c}{\textbf{$ R = 4 \quad \sigma_{\zbm} = 0.00$}} 
& \multicolumn{2}{c}{\textbf{$ R = 4  \quad \sigma_{\zbm} = 0.01$}} \\
\cmidrule(lr){2-3} \cmidrule(lr){4-5}
  & PSNR $\uparrow$ & LPIPS$\downarrow$  & PSNR$\uparrow$  & LPIPS$\downarrow$  \\
\midrule
Prostate          & $24.15$ &  $0.3223$   & $23.89$ & $0.3268$\\
Knee               & $26.51$ &  $0.2697$   & $25.90$ & $0.2774$\\
Brain              & $27.92$ &  $0.2159$   & $ 27.42$ & $0.2234$\\  \cdashline{1-5}
Adapt64 (Prostate)    & $25.17$ &  $0.3071$   & $24.80$ & $0.3089$\\
\bottomrule
\end{tabular}
\label{tab:sup_mri}
\end{table} 
%%%%%%%%%%%%%%%%%%%%%  Table  %%%%%%%%%%%%%%%%%%%%%

\begin{figure}[t]
    \centering
    \includegraphics[width=1.0\textwidth]{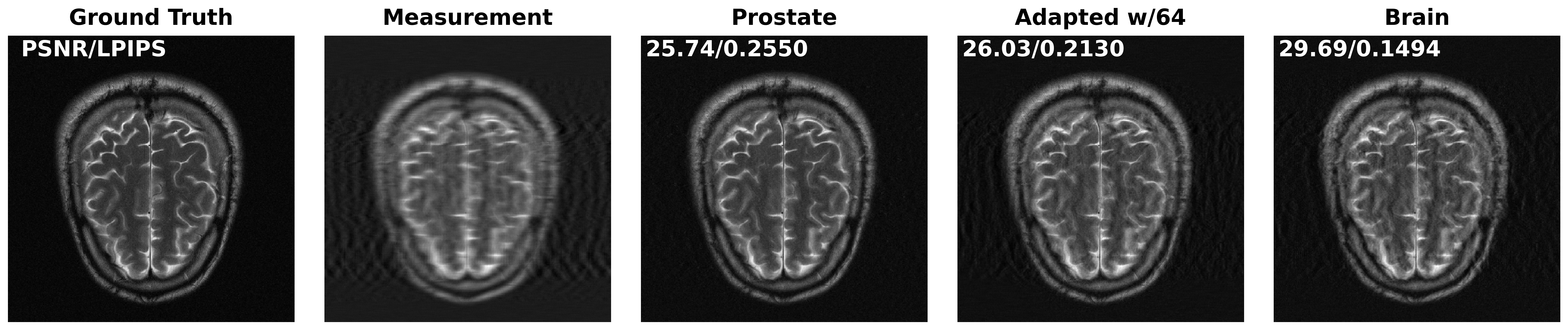}
    \includegraphics[width=1.0\textwidth]{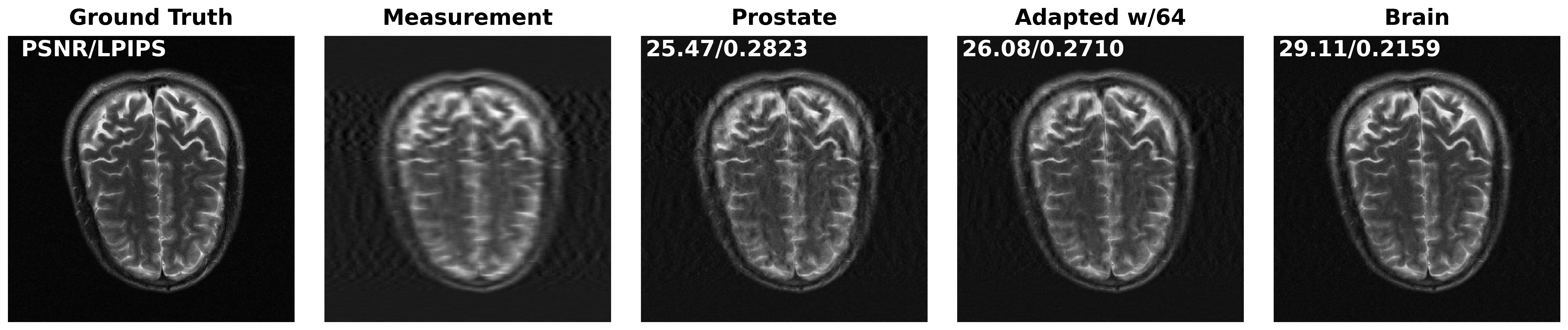}\caption{\small\textit{Visual comparison of single-coil MRI reconstruction using  DPS~\cite{chung2023diffusion} on a Brain MRI slice with acceleration ratio $R = 4$ and no measurement noise.  Note the performance gap between the InD and OOD models, and the improvement achieved by adapting the OOD models using only corrupted measurements.}}
    \label{fig:sup_visual_mri}
\end{figure}
%%%%%%%%%%%%%%%%%%%%%  Figure  %%%%%%%%%%%%%%%%%%%%%

\end{document}